\newcommand{\bydef}[1]{{#1}_{\textrm{def}}}
\newcommand{\MF}{\mathcal{F}}
\newcommand{\MX}{\mathcal{X}}
\newcommand{\MZ}{\mathcal{Z}}
\theoremstyle{plain}
\newtheorem{definition}{Definition}
\newtheorem{proposition}{Proposition}
\title{BQ-NCO: Bisimulation Quotienting for Efficient Neural Combinatorial Optimization}
\author{%
Darko Drakulic, Sofia Michel, Florian Mai\thanks{IDIAP Research Institute, Work done during an internship at Naver Labs Europe}, Arnaud Sors, Jean-Marc Andreoli \\
  Naver Labs Europe\\
  \texttt{firstname.lastname@naverlabs.com} \\
}
\begin{document}

\maketitle

\begin{abstract}
Despite the success of neural-based combinatorial optimization methods for end-to-
end heuristic learning, out-of-distribution generalization remains a challenge. In
this paper, we present a novel formulation of Combinatorial Optimization Problems (COPs) as Markov Decision Processes (MDPs) that effectively leverages common symmetries of COPs to improve out-of-distribution robustness. Starting
from a direct MDP formulation of a constructive method, we introduce a
generic way to reduce the state space, based on Bisimulation Quotienting (BQ) in MDPs. Then, for COPs with a recursive nature, we specialize the bisimulation and show how the reduced state exploits the symmetries of these problems and facilitates MDP solving. Our approach is principled and we prove that an optimal policy for the proposed BQ-MDP actually solves the associated COPs. We illustrate our approach on five classical problems: the Euclidean and Asymmetric Traveling Salesman, Capacitated Vehicle Routing, Orienteering and Knapsack Problems. Furthermore, for each problem, we introduce a simple attention-based policy network for the BQ-MDPs, which we train by imitation of (near) optimal solutions of small instances from a single distribution. We obtain new state-of-the-art results for the five COPs on both synthetic and realistic benchmarks. Notably, in contrast to most existing neural approaches, our learned policies show excellent generalization performance to much larger instances than seen during training, without any additional search procedure. Our code is available at: \href{https://github.com/ddrakulic/BQNCO/}{url}. 
\end{abstract}

\section{Introduction} %
Combinatorial Optimization Problems (COPs) are crucial in many application domains such as transportation, energy, logistics, etc. Because they are generally NP-hard \citep{cook_combinatorial_1997}, their resolution at real-life scales is mainly done by problem-specific heuristics, which heavily rely on expert knowledge. 
Neural combinatorial optimization (NCO) is a relatively recent line of research that focuses on using deep neural networks to learn such heuristics from data, possibly exploiting regularities in problem instances of interest \citep{bengio_machine_2021, cappart_combinatorial_2021}. 
Among NCO methods, the so-called constructive approaches view the process of building a solution incrementally as a sequential decision making problem, which can naturally be modeled with Markov Decision Processes (MDPs). 
Although many previous works have successfully used this strategy, the process of formulating an appropriate MDP (esp. the state and action spaces) is usually specific to each problem (e.g. \citep{kool_attention_2019} for routing problems, \citep{zhang_learning_2020} for jobshop scheduling and \citep{jiang_solving_2021} for bin packing). These design choices have a considerable impact when solving the MDP. In particular, exploiting the COP's symmetries can boost the efficiency and generalization of neural solvers (e.g. \citep{kwon_pomo_2020, kim_sym-nco_2022} leverage the symmetries of routing problems in Euclidean graphs). 

In this paper, we present a generic and principled framework to cast any COP as an MDP and then a way to account for common COPs' symmetries to design a more efficient MDP. More precisely, given a user-defined {\it solution space}, we show how to automatically derive a {\it direct MDP}, where the state is a partial solution and the actions are the construction steps. While the application of this framework to some COPs encompasses some previously proposed MDPs (e.g \citep{kool_attention_2019}, for routing problems), to the best of our knowledge, we are the first to provide a way to derive an MDP for any COP and prove the equivalence between the optimal MDP policies and solving the COP. 

Next, noting the limitations of using the partial solutions as states, we introduce a generic {\it bisimulation} mapping that allows to reduce the state space. In particular, we show that for problems that satisfy a {\it recursion property}, the bisimulation simply maps a partial solution to a new (induced) instance, which corresponds to the remaining subproblem when the partial solution is fixed. As many partial solutions can induce the same instance, the resulting bisimulation quotiented (BQ-)MDP has a significantly smaller state space. Also, it enables more efficient learning by avoiding to independently learn the policy at states which are equivalent for the COP at hand. In contrast to previous works (\citep{kwon_pomo_2020, kim_sym-nco_2022}), the symmetries that we exploit are not linked to the Euclidean nature of the graphs but to the recursive property of the problems, which is very common in CO as it includes the Optimality Principle of Dynamic Programming \citep{bellman_theory_1954, bertsekas_dynamic_vol1_2012}.

We illustrate our framework on five well-known COPs: the Euclidean and Asymmetric Traveling Salesman Problems (TSP, ATSP), the Capacitated Vehicle Routing Problem (CVRP), the Orienteering Problem (OP) and the Knapsack Problem (KP). Furthermore, we propose a simple transformer-based architecture \citep{vaswani_attention_2017} for these problems, well-suited to the BQ-MDPs, and requiring only minor variations to work for the five problems.
For each problem, we train our policy by imitation of expert trajectories derived from (near) optimal solutions of small instances sampled from a single distribution. We test on both synthetic and realistic benchmarks of varying size and node distributions. Our model provides new state-of-the-art results on the five problems and exhibits excellent out-of-distribution generalization, especially to larger instances, a well-recognized challenge for NCO methods~\citep{joshi_learning_2022,manchanda_generalization_2022}.
Notably with a single greedy rollout, our learned policies outperform state-of-the-art end-to-end learning-based approaches for instances with more than 200 nodes. We show that we can further improve the performance with a beam-search (using more computation) while we can significantly speed-up the execution (at the cost of slight performance drop) by replacing the quadratic transformer model by a linear attention-based model, the PerceiverIO \cite{jaegle_perceiver_2022}.

In summary, our contributions are as follows: 1) We present a generic and principled framework to derive a direct MDP given any COP with minimal requirements; 2) We propose a method to reduce the direct MDPs via symmetry-focused bisimulation quotienting and define an explicit bisimulation for the class of recursive COPs; 3) We design an adequate transformer-based architecture for the BQ-MDPs, with only minor adaptations to work for the TSP (both Euclidean and asymmetric versions), CVRP, OP and KP; 4) We achieve state-of-the-art generalization performance on these five problems, significantly out-performing other neural-based constructive methods.

\section{Combinatorial Optimization as a Markov Decision Problem} %
\label{sec:co-as-mdp}
In this section, we propose a generic framework to represent any Combinatorial Optimization Problem (COP) as a Markov Decision Process (MDP) amenable to standard Machine Learning techniques. 

We denote a COP instance by: 
$$\min_{x\in X}f(x),$$ 
where $X$ is the finite, non-empty set of {\em feasible solutions}, while the {\em objective} $f$ is a real-valued function whose domain contains $X$. The complexity of CO is due to the cardinality of $X$, which, although finite, is generally exponential in the problem size.
Constructive approaches to CO build a solution sequentially by growing a partial solution at each step. A required assumption of constructive heuristics (although often left implicit) is that the feasibility of the final solution can be ensured through tractable conditions on the partial solutions at each step of the construction process.

\subsection{Solution Space}
\label{sec:solution-space}
Let us denote by $\MX$ the set of all possible {\em partial solutions} for a given COP, so that it contains the feasible solutions of {\it any} instance. We assume that $\MX$ is equipped with an operation $\circ$ having a neutral element $\epsilon$. Informally, $\epsilon$ is the ``empty'' partial solution and if $x,y$ are partial solutions, then $x{\circ}y$ denotes the result of applying the sequence of construction steps yielding $x$ followed by that yielding $y$. We denote by $\MZ{\subset}\MX$ the subset of partial solutions obtained from $\epsilon$ by just one construction step. By identification, we call the elements of $\MZ$ {\it steps} and assume that any partial (a fortiori feasible) solution can be obtained as the composition of a sequence of steps. We can now define the structure of the solution space as required by our framework.
\begin{definition}[Solution Space]
\label{def:solution-space}
A {\em solution space} is a tuple $(\MX,\circ,\epsilon,\MZ)$ where $(\MX,\circ,\epsilon)$ forms a monoid (see Appendix~\ref{anx:background-monoids} for background), and the step set $\MZ{\subset}\MX{\setminus}\{\epsilon\}$ is a generator of $\MX$, such that any element of $\MX$ has a finite positive number of step decompositions:
\begin{equation}
\label{eqn:state-decomposition}
\forall x\in\MX,\;0<|\{z_{1:n}\in\MZ^{n}:\;x=z_1\circ\cdots\circ z_n\}|<\infty.
\end{equation}
\end{definition}
Thus, solving a COP instance $(f,X)$ given a solution space $\MX$ containing $X$ amounts to finding a sequence of steps $z_{1:n}$ such that $z_1{\circ}\cdots{\circ}z_n{\in}\arg\min_{x\in X}f(x)$, a task which can naturally be modeled as an MDP, as shown below. 

\paragraph{Examples of Solution Spaces.}
We provide examples of solution spaces for two classical COPs.
In the Euclidean Traveling Salesman Problem (TSP), an instance is defined by a set of nodes (including an origin) associated with points in a Euclidean space and the goal is to find the shortest tour that starts and ends at the origin and visits each other node exactly once. A partial solution for the TSP can be represented by a sequence of nodes. Therefore we can define $\MX_{\text{TSP}}$ as the set of finite sequences of nodes.
In the Knapsack Problem (KP), an instance is defined by a set of items associated with weight and value features, and the goal is to select a subset of these items such that the sum of their weights does not exceed a given capacity while their cumulated value is maximized. We can define $\MX_{\text{KP}}$ as the set of finite subsets of items.
For TSP (resp. KP), the {\it operator} $\circ$ is sequence concatenation (resp. set union), the {\it neutral element} $\epsilon$ is the empty sequence (resp. set) and a {\it step} is a sequence (resp. set) of length 1.
Note that these problems admit other (e.g. graph-based rather than set or sequence-based) solution spaces. In fact, a solution space is not intrinsic to a problem, nor vice-versa.

\subsection{The Direct MDP}  
Given a solution space $(\MX,\circ,\epsilon,\MZ)$, we consider the set $\MF_\MX$ of instances $(f,X)$ where the set of feasible solutions $X$ is a finite, non-empty subset of $\MX$, and the objective function $f{\in}\mathbb{R}^\MX$ is defined on $\MX$ rather than just $X$, {\it i.e. it is well-defined for any partial solution}.
Now, given an instance $(f,X){\in}\MF_\MX$, we can derive its {\em direct} MDP, denoted $\mathcal{M}_{(f,X)}$, as follows. Its {\bf state space} is the set $\bar{X}{=}\{x{\in}\MX{:}\,\exists y{\in}\MX,\; x{\circ} y{\in} X\}$ of partial solutions which can potentially be expanded into a feasible one. Its {\bf action space} is $\MZ{\cup}\{\epsilon\}$, i.e. an action is either a step or the neutral action. Its {\bf transitions} are deterministic, and can be represented as a labeled transition system with the following two rules, where the label of each transition consists of its action-reward pair, placed respectively above and below the transition arrow:

\begin{equation}
\label{eqn:direct-transitions}
x\xrightarrow[f(x)-f(x\circ z)]{z}x{\circ}z \;\textrm{ if }\; x{\circ} z\in \bar{X}
\hspace{1cm} \text{and} \hspace{1cm}
x\xrightarrow[\,0\,]{\epsilon}x \;\textrm{ if }\; x\in X.
\end{equation}
Here, $x{\in}\bar{X}$ is a state and $z{\in}\MZ$ is a step. The condition in each rule determines whether the action is allowed. Thanks to the structure of the solution space captured by Def.~\ref{def:solution-space}, the direct MDP of an instance $(f,X)$ has three key properties, proved in Appendix~\ref{anx:proof-properties}. (1) From any state, the number of allowed actions is finite (even if $\MZ$ is infinite), therefore $\mathcal{M}_{(f,X)}$ belongs to the simpler class of discrete-action MDPs. (2) From any state, there is always at least one allowed action (i.e. there are no dead-end states). This assumes that one can guarantee if a partial solution can be expanded into a feasible one, as required of the valid states in $\bar{X}$. This is a common assumption of constructive heuristics (often left implicit) and it avoids the complications associated with dead ends in MDPs~\citep{kolobov_theory_2012}. (3) In any infinite trajectory in $\mathcal{M}_{(f,X)}$, the number of transitions involving a {\it step} action is finite, while all the other transitions involve the {\it neutral} action. Since the neutral action yields a null reward, this means that the return of a trajectory is well defined, without having to discount the rewards. Also, if $a_{1:\infty}$ is the (infinite) sequence of actions of the trajectory, since all but finitely many of them are neutral actions, their composition $a_1{\circ}a_2{\circ}{\cdots}$ is well defined and is called the {\em outcome} of the trajectory. 
We can now establish the main result of this section, proved in Appendix~\ref{anx:proof-soundness}: 
\begin{proposition}[Soundness of the Direct MDP]
\label{prop:soundness-direct}
Given a solution space $\MX$ and an instance $(f,X){\in}\MF_\MX$, let $\mathcal{M}_{(f,X)}$ be its direct MDP. The set $\arg\min_{x\in X}f(x)$ is exactly the set of $x$ such that there exists an optimal policy $\pi$ for $\mathcal{M}_{(f,X)}$ where $x$ is the outcome of a trajectory starting at $\epsilon$ under policy $\pi$.
\end{proposition}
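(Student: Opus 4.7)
The plan is a telescoping argument. Along any trajectory $\epsilon \to z_1 \to z_1{\circ}z_2 \to \cdots \to z_1{\circ}\cdots{\circ}z_n \to z_1{\circ}\cdots{\circ}z_n \to \cdots$ consisting of $n$ step transitions followed by neutral ones, the consecutive step rewards $f(z_1{\circ}\cdots{\circ}z_{i-1}) - f(z_1{\circ}\cdots{\circ}z_i)$ telescope, and the neutral transitions contribute zero, so the total (undiscounted) return equals $f(\epsilon) - f(x^\star)$ where $x^\star{=}z_1{\circ}\cdots{\circ}z_n$ is the outcome of the trajectory.

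First, I would justify that every trajectory starting at $\epsilon$ has exactly this form. By property (2) there are no dead ends, so trajectories are infinite; by property (3) they contain only finitely many step transitions, hence there is a smallest time after which every action is neutral. Since the neutral action is enabled only on $X$, the state at that time lies in $X$ and coincides with the outcome $a_1{\circ}a_2{\circ}\cdots$.

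Combining these points, and using that the direct MDP is deterministic with fixed start state $\epsilon$, the return of the unique trajectory induced by a (deterministic) policy $\pi$ equals $f(\epsilon) - f(\mathrm{outcome}(\pi))$ with $\mathrm{outcome}(\pi) \in X$. Since $f(\epsilon)$ is a constant, $\pi$ maximises the return from $\epsilon$ if and only if $\mathrm{outcome}(\pi)\in\arg\min_{x\in X}f(x)$, which yields the forward inclusion.

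For the reverse inclusion, given $x^\star\in\arg\min_{x\in X}f(x)$, I would pick a step decomposition $x^\star{=}z_1{\circ}\cdots{\circ}z_n$ (which exists by Def.~\ref{def:solution-space}) and define $\pi$ to play $z_i$ at the prefix $z_1{\circ}\cdots{\circ}z_{i-1}$, the neutral action at $x^\star$, and any Bellman-optimal action elsewhere in $\bar{X}$. Each prefix belongs to $\bar{X}$ so $\pi$ is admissible, and its trajectory from $\epsilon$ has outcome $x^\star$. The main subtlety is checking that $\pi$ is not merely optimal in terms of its outcome at $\epsilon$ but Bellman-optimal at every visited state; this reduces to showing that $V(y) = f(y) - f(x^\star)$ for every prefix $y$ of $x^\star$, which holds because any feasible completion $y{\circ}w$ satisfies $f(y{\circ}w) \geq f(x^\star)$ and the remaining suffix of $x^\star$ achieves this bound. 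The rest is routine bookkeeping on a finite deterministic MDP.
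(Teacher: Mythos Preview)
Your argument is correct and rests on the same two ingredients as the paper's proof: the telescoping identity giving $R(y)=f(\epsilon)-f(\psi(y))$ for the return of a trajectory $y$ with outcome $\psi(y)\in X$, and the surjectivity of the outcome map $\psi$ from trajectories starting at $\epsilon$ onto $X$.

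The packaging differs slightly. The paper isolates a one-line lemma---if $\psi:Y\to X$ is surjective then $\arg\min_X f=\psi(\arg\min_Y f\circ\psi)$---and applies it directly at the level of \emph{trajectories}, then simply identifies ``optimal trajectory from $\epsilon$'' with ``trajectory obtainable from an optimal policy''. You instead argue by two inclusions at the level of \emph{policies}, which forces you to construct an explicit $\pi$ and verify Bellman-optimality along the visited prefixes. Your extra step (showing $V(y)=f(y)-f(x^\star)$ for every prefix $y$ of an optimal $x^\star$) is sound and in fact makes precise a point the paper leaves implicit; note that the prefixes are pairwise distinct by the finite-decomposition condition~\eqref{eqn:state-decomposition}, so your $\pi$ is well defined. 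The paper's trajectory-level lemma is a bit slicker, while your version is more explicit about what ``optimal policy'' means.
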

This result states the exact correspondence between the optimal solutions of a COP instance and the optimal policies of its direct MDP. Thus, the vast corpus of techniques developed to search for optimal policies in MDPs in general is applicable to solving COPs. The direct MDP encompasses many previously proposed MDPs, where the state is a partial solution and each action consists of adding an element to the partial solution, e.g. \citep{khalil_learning_2017} for graph problems and \citep{bresson_transformer_2021} for the TSP.

\section{Bisimulation Quotienting}

\subsection{State information and symmetries}

Let ($\MX,\circ,\epsilon,\MZ)$ be a solution space and $\mathcal{M}_{(f,X)}$ the direct MDP of an instance $(f,X){\in}\MF_\MX$. In a trajectory of $\mathcal{M}_{(f,X)}$, observe that each non neutral action $z$ ``grows'' the state from $x$ to $x{\circ}z$, as defined by \eqref{eqn:direct-transitions}. For many COPs, this is counter-intuitive, because it implies that the state information increases while the number of allowed actions generally decreases. For example, in the TSP, the state contains the sequence of visited nodes, which grows at each step, while the allowed actions are the set of unvisited nodes, which shrinks at each step. At the end, the state may carry the most information (a full-grown feasible solution), but it is not used in any decision, since there is a single allowed action anyway (the neutral action). To address this mismatch between the information carried by the state and the complexity of the decision it supports, we seek to define a new state which captures only the information needed for the continuation of the construction process. To do so, we observe that a partial solution $y{\in}\MX$ can be seen both as an {\it operator on partial solutions} $x{\mapsto}x{\circ}y$ which {\it grows} its operand, or, alternatively, as the following {\it operator on instances}, which {\it reduces} its operand:
\[
(f,X){\mapsto}(f*y, X*y) 
\hspace{0.3cm}\text{with:}\hspace{0.3cm} (f*y)(x)=f(y\circ x) 
 \hspace{0.3cm}\text{and}\hspace{0.3cm}  X*y = \{x|y\circ x\in X\}.
\]
In fact, $(f{*}y,X{*}y)$ is the so-called the {\it tail subproblem}~\citep{bertsekas_dynamic_vol1_2012} of instance $(f,X)$ after partial solution $y$ has been constructed. We observe that for a given tail subproblem $(f',X'){\in}\MF_\MX$, there can be many combinations of an instance $(f,X){\in}\MF_\MX$ and partial solution $x{\in}\bar{X}$ such that $(f{*}x,X{*}x){=}(f',X')$.
For example in the TSP, the tail subproblem consists of finding the shortest path from the last node $e$ of the partial solution back to the origin $o$ of the instance, which visits the set $I$ of not-yet-visited nodes. Therefore we can see that any TSP instance with same origin $o$ and any node sequence ending at $e$ with not-yet-visited nodes $I$ will lead to the same tail subproblem. This is a strong {\it symmetry} with respect to the MDP policy: intuitively it means that an optimal policy should produce the same action for these {\em infinitely} many (instance--partial solution) pairs. Treating them as distinct states, as in the direct MDP, forces a training procedure to learn the underlying symmetry in order to map them into a common representation.

\subsection{Bisimulation Quotiented MDP}
\label{sec:bq-mdp}
In order to leverage the above ``reduction operator'' view and its potential symmetry, we define the following ``reduced'' MDP $\mathcal{M}$. Its {\bf state space} is the set $\MF_\MX$ of instances. Its {\bf action space} is $\MZ{\cup}\{\epsilon\}$, as in the direct MDP. Its {\bf transitions} are also deterministic, and can be expressed with two rules, dual of those in~\eqref{eqn:direct-transitions}:
\begin{equation}
\label{eqn:adjoint-transitions}
(f,X)\xrightarrow[f(\epsilon)-f(z)]{z}(f*z,X*z) \;\textrm{ if }\; X*z\not=\emptyset
\hspace{0.4cm} \text{and} \hspace{0.4cm}
(f,X)\xrightarrow[\;0\;]{\epsilon}(f,X) \;\textrm{ if }\; \epsilon\in X.
\end{equation}
This MDP is defined at the level of the whole solution space rather than an individual instance. Now, the mapping $\boldsymbol{\Phi}_{(f,X)}{:}\bar{X}{\mapsto}\MF_\MX$ from the direct to the reduced states defined by $\boldsymbol{\Phi}_{(f,X)}(x){=}(f{*}x,X{*}x)$ for all $x{\in}\bar{X}$ is a {\it bisimulation} between $\mathcal{M}_{(f,X)}$ and $\mathcal{M}$ (background in Appendix~\ref{anx:background-bisimulation} and a proof in Appendix
~\ref{anx:proof-bisimulation}). Formally, $\mathcal{M}$ is isomorphic to the quotient of the direct MDP by the bisimulation (precise statement and proof in Appendix~\ref{anx:proof-bisimulation}), hence is called the Bisimulation Quotiented (BQ-)MDP. By the bisimulation property, for any direct state $x$, the action-reward sequences spawned from $x$ in $\mathcal{M}_{(f,X)}$ and from $\boldsymbol{\Phi}_{(f,X)}(x)$ in $\mathcal{M}$ are identical. Therefore, there is a one-to-one correspondence between the trajectories of $\mathcal{M}_{(f,X)}$ starting at $\epsilon$ and those of $\mathcal{M}$ starting at $\boldsymbol{\Phi}_{(f,X)}(\epsilon){=}(f,X)$. Hence, the analogue of Prop.~\ref{prop:soundness-direct} holds for the BQ-MDP:
\begin{proposition}[Soundness of the BQ-MDP]
\label{prop:soundness-adjoint}
Let $\mathcal{M}$ be the BQ-MDP of a solution space $\MX$, and $(f,X){\in}\MF_\MX$ an instance. The set $\arg\min_{x\in X}f(x)$ is exactly the set of $x$ such that there exists an optimal policy $\pi$ for $\mathcal{M}$ where $x$ is the outcome of a trajectory starting at $(f,X)$ under policy $\pi$.
\end{proposition}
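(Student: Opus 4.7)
The plan is to reduce Proposition~\ref{prop:soundness-adjoint} to Proposition~\ref{prop:soundness-direct} via the bisimulation $\boldsymbol{\Phi}_{(f,X)}$, using the trajectory correspondence it induces between $\mathcal{M}_{(f,X)}$ and $\mathcal{M}$.

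I would first unpack what the bisimulation gives in this deterministic setting: for any transition $x\xrightarrow[r]{a}x'$ in $\mathcal{M}_{(f,X)}$, the image $\boldsymbol{\Phi}_{(f,X)}(x)\xrightarrow[r]{a}\boldsymbol{\Phi}_{(f,X)}(x')$ is a transition in $\mathcal{M}$, and conversely every transition out of $\boldsymbol{\Phi}_{(f,X)}(x)$ in $\mathcal{M}$ lifts to a unique transition out of $x$ in $\mathcal{M}_{(f,X)}$ (by determinism plus the allowed-action equivalence). Iterating, the trajectories of $\mathcal{M}_{(f,X)}$ starting at $\epsilon$ and those of $\mathcal{M}$ starting at $\boldsymbol{\Phi}_{(f,X)}(\epsilon){=}(f,X)$ are in bijection via their common sequence of action-reward pairs, and therefore have identical returns and identical outcomes $a_1{\circ}a_2{\circ}\cdots$

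Next I would transfer optimal policies in both directions. Given an optimal policy $\pi$ for $\mathcal{M}$, the pullback $\tilde{\pi}(x){=}\pi(\boldsymbol{\Phi}_{(f,X)}(x))$ defines a policy on $\bar{X}$ whose $\epsilon$-trajectory in $\mathcal{M}_{(f,X)}$ has the same action-reward sequence (hence the same outcome) as $\pi$'s $(f,X)$-trajectory in $\mathcal{M}$; a standard value-function argument using the bisimulation shows $\tilde{\pi}$ is itself optimal for $\mathcal{M}_{(f,X)}$. For the converse direction, given an optimal $\tilde{\pi}$ for $\mathcal{M}_{(f,X)}$, I would construct an optimal $\pi$ for $\mathcal{M}$ whose $(f,X)$-trajectory replicates that of $\tilde{\pi}$ from $\epsilon$: set $\pi(\boldsymbol{\Phi}_{(f,X)}(x)){=}\tilde{\pi}(x)$ along the states visited by the trajectory (no ambiguity arises along a single trajectory), and for every remaining state $(f',X'){\in}\MF_\MX$ pick any optimal action, whose existence follows by applying Proposition~\ref{prop:soundness-direct} to $(f',X')$ and transporting through $\boldsymbol{\Phi}_{(f',X')}$.

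Composing this correspondence with Proposition~\ref{prop:soundness-direct} concludes the argument: $x^*{\in}\arg\min_{x\in X}f(x)$ iff $x^*$ is the outcome of some optimal-policy $\epsilon$-trajectory of $\mathcal{M}_{(f,X)}$ iff $x^*$ is the outcome of some optimal-policy $(f,X)$-trajectory of $\mathcal{M}$. The main obstacle I anticipate is precisely the last step of the policy transfer: because the BQ-MDP's state space $\MF_\MX$ is much larger than a single instance's reachable set, an optimal policy for $\mathcal{M}$ must specify optimal actions at every instance, not just those reachable from $(f,X)$. This is handled cleanly by invoking Proposition~\ref{prop:soundness-direct} instance by instance to supply an optimal action at every point of $\MF_\MX$ and then gluing, so the extension exists and no global coherence issue arises.
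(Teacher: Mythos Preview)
Your proposal is correct and follows the same route as the paper: transfer Proposition~\ref{prop:soundness-direct} through the bisimulation $\boldsymbol{\Phi}_{(f,X)}$, using the one-to-one correspondence it induces between trajectories of $\mathcal{M}_{(f,X)}$ starting at $\epsilon$ and trajectories of $\mathcal{M}$ starting at $(f,X)$ with identical action-reward sequences (hence identical returns and outcomes). The paper's argument is in fact terser than yours---it simply states the trajectory correspondence and concludes---so your explicit treatment of the policy transfer, including the extension of $\pi$ to all of $\MF_\MX$ via Proposition~\ref{prop:soundness-direct} applied instance by instance, adds rigor the paper omits.
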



\paragraph{Impact on model architecture.} Although both the direct and BQ-MDPs are equivalent in terms of solving their associated COP, their practical interpretation leads to major differences. In the direct MDP view, it would not make sense to learn each instance-specific MDP separately. Instead a generic MDP conditioned on an input instance is learnt, similar to goal-conditioned Reinforcement Learning~\citep{schaul_universal_2015,liu_goal-conditioned_2022} (the goal is here the input instance). A typical policy model architecture consists of an encoder in charge of computing an embedding of the input instance and a decoder that takes the instance embedding and the current partial solution to compute the next action (Fig.~\ref{fig:architecture} left), e.g. the Attention Model \citep{kool_attention_2019} or PointerNetworks \citep{vinyals_pointer_2015}.
In the rollout of a trajectory, the encoder needs only be invoked once since the instance does not change throughout the rollout. For the BQ-MDP, only one, unconditional MDP is learnt for the whole solution space. The model can be simpler since the distinction between encoder and decoder vanishes (Fig.~\ref{fig:architecture} right). On the other hand, the whole model must be applied to a new input instance at each step of a rollout.

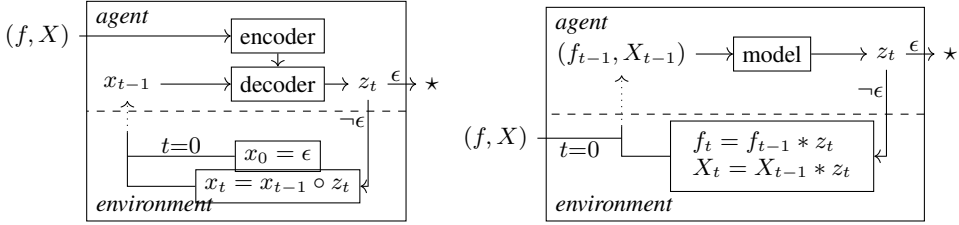
\begin{figure}[h!]
\begin{center}
\begin{tikzpicture}[scale=.8]
\begin{scope}[local bounding box=a]
\node at (0,0.4) {}; \node at (4.9,-2.9) {};
\node[draw] (encoder) at (3,0) {\small encoder};
\node[draw] (decoder) at (3,-.8) {\small decoder};
\node (z) at (4.5,-.8) {\small $z_t$};
\coordinate (z1) at (5.1,-.8); \draw (z) -- (z1) node[midway,above,inner sep=1] {\small $\epsilon$};
\node (x) at (.5,-.8) {\small $x_{t-1}$};
\coordinate (x0) at (.5,-1.6);
\node[draw] (x1) at (3,-2) {\small $x_0=\epsilon$};
\node[draw] (x2) at (3,-2.5) {\small $x_{t}=x_{t-1}\circ z_t$};
\draw[->] (encoder) -- (decoder);
\draw (z) -- (z|-x2) node[near start,left,inner sep=1]{\small $\neg\epsilon$}; \draw[->] (z|-x2) |- (x2); \draw (x2) -| (x0); \draw (x1) -- node[midway,above,inner sep=1] {\small $t{=}0$} (x1-|x0); \draw[dotted,->] (x0) -- (x);
\draw[->] (x) -- (decoder); \draw[->] (decoder) -- (z);
\draw ($(a.north east)+(.03,.03)$) rectangle ($(a.south west)+(-.03,-.03)$);
\draw[dashed] (a.east)--(a.west);
\node[anchor=north west] at ($(a.north west)$) {\small\em agent};
\node[anchor=south west] at ($(a.south west)$) {\small\em environment};
\end{scope}
\node[anchor=east] (inst) at (-.3,0) {\small $(f,X)$}; \draw[->] (inst) -- (encoder);
\node[anchor=west] (sol) at (5.3,-.8) {$\star$}; \draw[->] (z1) -- (sol);
\end{tikzpicture}
\begin{tikzpicture}[scale=.8]
\begin{scope}[local bounding box=a]
\node at (0,0.6) {}; \node at (4.9,-2.6) {};
\node[draw] (model) at (3,0) {\small model};
\node (z) at (4.9,0) {\small $z_t$};
\coordinate (z1) at (5.5,0); \draw (z) -- (z1) node[midway,above,inner sep=1] {\small $\epsilon$};
\node (fX) at (.5,0) {\small $(f_{t-1},X_{t-1})$};
\coordinate (via) at (.5,-1.2);
\node[draw] (fX0) at (3,-1.7) {\small $\begin{array}{l}f_t=f_{t-1}*z_t\\X_t=X_{t-1}*z_t\end{array}$};
\draw (z) -- (z|-fX0) node[near start,left,inner sep=1]{\small $\neg\epsilon$}; \draw[->] (z|-fX0) |- (fX0); \draw (fX0) -| (via); \draw[dotted,->] (via) -- (fX);
\draw[->] (fX) -- (model); \draw[->] (model) -- (z);
\draw ($(a.north east)+(.03,.03)$) rectangle ($(a.south west)+(-.03,-.03)$);
\draw[dashed] (a.east)--(a.west);
\node[anchor=north west] at ($(a.north west)$) {\small\em agent};
\node[anchor=south west] at ($(a.south west)$) {\small\em environment};
\end{scope}
\node[anchor=east] (inst) at (-.9,-1.4) {\small $(f,X)$}; \draw (inst) -- node[midway,below,inner sep=1] {\small $t{=}0$} (inst-|via);
\node[anchor=west] (sol) at (5.7,0) {$\star$}; \draw[->] (z1) -- (sol);
\end{tikzpicture}
\end{center}
\caption{\label{fig:architecture}Policy model architectures for the  direct MDP (left) and BQ-MDP (right).}
\end{figure}

\subsection{Instance parametrization and recursion}
\label{sec:recursion}

For a given COP, instances are generally described by a set of parameters, which are used as input to the policy network. In the BQ-MDP, the instance is updated at each step $z$, according to the equations $X'{=}X{*}z$ and $f'{=}f{*}z$. In order to implement the BQ-MDP, a key requirement is that $(f',X')$ can be represented in the {\it same} parametric space as $(f, X)$. In fact, this is the case for COPs that satisfy the {\it tail-recursion property}: after applying a number of construction steps to an instance, the remaining tail subproblem is itself an instance of the original COP. This is a very common property in CO and includes in particular the Optimality Principle of Dynamic Programming \citep{bellman_theory_1954, bertsekas_dynamic_vol1_2012}: all problems that are amenable to dynamic programming satisfy the tail-recursion property. For these tail-recursive COPs, the bisimulation simply maps a partial solution to the tail subproblem instance it induces.

\noindent \textbf{Application to the KP and path-TSP.} 
One can see that the KP naturally satisfies the tail-recursion property. Consider an instance of the KP with capacity $c$ and items $(w_i,v_i)_{i\in I}$ described by their weight $w_i$ and value $v_i$. A partial solution is a subset $J{\subset}I$ of items. In this case, the bisimulation $\boldsymbol{\Phi}$ maps $J$ to a new KP instance with capacity $c{-}\sum_{j\in J}w_j$ and item set $(w_i,v_i)_{i\in I\setminus J}$ (see Appendix~\ref{anx:kp}).
For the TSP, we introduce a slightly more general problem, path-TSP, where the goal is to find a shortest path between an origin and a destination node visiting a set of customer nodes, instead of a tour. The tail subproblem for a partial path $x_{1:k}$ consists of finding the shortest path from $x_k$ to the destination that visits all the remaining nodes, hence it is also a path-TSP instance. Thus path-TSP is tail-recursive, and TSP is simply a sub-problem of path-TSP where origin and destination are associated with the same Euclidean point. A similar reasoning holds for CVRP and OP, leading to path-CVRP and path-OP (see in Appendix~\ref{anx:cvrp} and ~\ref{anx:op}).

\section{Policy Learning}
\label{sec:neuralnet}
We now describe our proposed transformer-based policy network for the BQ-MDPs. For simplicity, we focus here on the path-TSP; the models for path-CVRP, path-OP and KP differ only slightly and are presented in Appendix~\ref{anx:cvrp},~\ref{anx:op} and~\ref{anx:kp}.

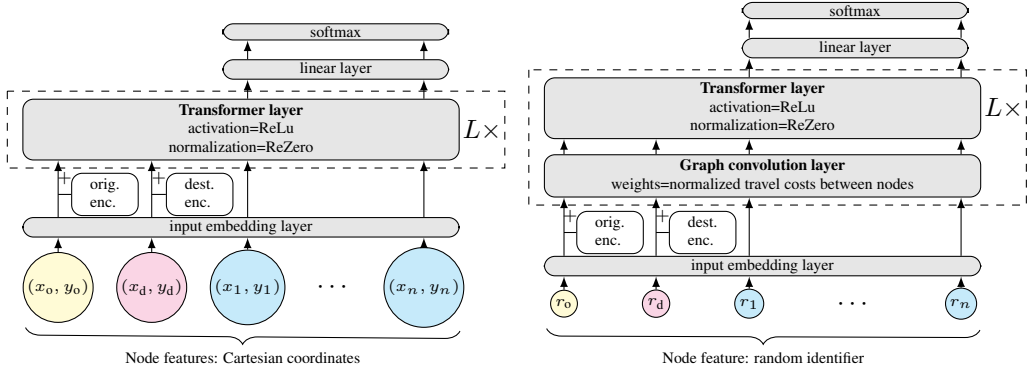
\begin{figure*}
\begin{center}
\begin{subfigure}{.49\textwidth}
\begin{tikzpicture}[
>=latex,
n/.style={draw,circle,inner sep=1},
t/.style={draw,rectangle,rounded corners,fill=black!10,inner sep=1}
]
\matrix[column sep=.3cm]{
\node[n,fill=yellow!20] (ori) {\tiny $(x_{\textrm{o}},y_{\textrm{o}})$}; &
\node[n,fill=magenta!20] (dst) {\tiny $(x_{\textrm{d}},y_{\textrm{d}})$}; &
\node[n,fill=cyan!20] (x1) {\tiny $(x_1,y_1)$}; &
\node {$\cdots$}; &
\node[n,fill=cyan!20] (xn) {\tiny $(x_n,y_n)$};\\
};
\draw[decoration={brace,amplitude=.2cm,mirror},decorate] ([shift={(-.1cm,-.2cm)}]ori.south west) -- node[label={[below,yshift=-.3cm]{\tiny Node features: Cartesian coordinates}}] {} ([shift={(.1cm,-.2cm)}]xn.south east);
\coordinate (hemb) at (0,.8); \coordinate (henc) at (0,2.1);
\coordinate (hdec) at (0,2.9); \coordinate (hsoft) at (0,3.4);
\coordinate (midall) at ($(ori)!.5!(xn)$);\coordinate (midx) at ($(x1)!.5!(xn)$);
\coordinate (mid1) at ($(ori)!.5!(dst)$); \coordinate (mid2) at ($(dst)!.5!(x1)$);
\node[t,minimum width=5.8cm] (emb) at (hemb-|midall) {\tiny input embedding layer};
\node[t,minimum width=5.8cm] (enc) at (henc-|midall) {\tiny \begin{tabular}{c}\textbf{Transformer layer}\\ activation=ReLu\\ normalization=ReZero\end{tabular}};
\draw[dashed] ([shift={(-.2cm,.1cm)}]enc.north west) rectangle ([shift={(.6cm,-.1cm)}]enc.south east) {};
\node at ([xshift=.3cm]enc.east) {$L\times$};
\node[t,minimum width=3cm] (dec) at (hdec-|midx) {\tiny linear layer};
\node[t,minimum width=3cm] (soft) at (hsoft-|midx) {\tiny softmax};
\coordinate (hembenc) at ($(emb.north)!.4!(enc.south)$);
\node[t,fill=white] (orisig) at (hembenc-|mid1) {\tiny \begin{tabular}{l}orig.\\enc.\end{tabular}};
\node[t,fill=white] (dstsig) at (hembenc-|mid2) {\tiny \begin{tabular}{l}dest.\\enc.\end{tabular}};
\draw[->] (ori) -- (emb.south-|ori);
\draw[->] (dst) -- (emb.south-|dst);
\draw[->] (x1) -- (emb.south-|x1);
\draw[->] (xn) -- (emb.south-|xn);
\draw[->] (emb.north-|ori) -- (enc.south-|ori);
\draw[->] (emb.north-|dst) -- (enc.south-|dst);
\draw[->] (emb.north-|x1) -- (enc.south-|x1);
\draw[->] (emb.north-|xn) -- (enc.south-|xn);
\draw[->] (enc.north-|x1) -- (dec.south-|x1);
\draw[->] (enc.north-|xn) -- (dec.south-|xn);
\draw[->] (dec.north-|x1) -- (soft.south-|x1);
\draw[->] (dec.north-|xn) -- (soft.south-|xn);
\draw (orisig) -- (ori|-orisig) node[midway,label={[above,inner sep=0]:\tiny $+$}] {};
\draw (dstsig) -- (dst|-dstsig) node[midway,label={[above,inner sep=0]:\tiny $+$}] {};
\end{tikzpicture}
\end{subfigure}
\begin{subfigure}{.49\textwidth}
\begin{tikzpicture}[
>=latex,
n/.style={draw,circle,inner sep=1},
t/.style={draw,rectangle,rounded corners,fill=black!10,inner sep=1}
]
\matrix[column sep=.85cm]{
\node[n,fill=yellow!20] (ori) {\tiny $r_{\textrm{o}}$}; &
\node[n,fill=magenta!20] (dst) {\tiny $r_{\textrm{d}}$}; &
\node[n,fill=cyan!20] (x1) {\tiny $r_1$}; &
\node {$\cdots$}; &
\node[n,fill=cyan!20] (xn) {\tiny $r_n$};\\
};
\draw[decoration={brace,amplitude=.2cm,mirror},decorate] ([shift={(-.1cm,-.2cm)}]ori.south west) -- node[label={[below,yshift=-.3cm]{\tiny Node feature: random identifier}}] {} ([shift={(.1cm,-.2cm)}]xn.south east);
\coordinate (hemb) at (0,.5); \coordinate (hgnn) at (0,1.7); \coordinate (henc) at (0,2.6);
\coordinate (hdec) at (0,3.4); \coordinate (hsoft) at (0,3.9);
\coordinate (midall) at ($(ori)!.5!(xn)$);\coordinate (midx) at ($(x1)!.5!(xn)$);
\coordinate (mid1) at ($(ori)!.5!(dst)$); \coordinate (mid2) at ($(dst)!.5!(x1)$);
\node[t,minimum width=5.8cm] (emb) at (hemb-|midall) {\tiny input embedding layer};
\node[t,minimum width=5.8cm] (gnn) at (hgnn-|midall) {\tiny \begin{tabular}{c}\textbf{Graph convolution layer}\\weights=normalized travel costs between nodes\end{tabular}};
\node[t,minimum width=5.8cm] (enc) at (henc-|midall) {\tiny \begin{tabular}{c}\textbf{Transformer layer}\\ activation=ReLu\\ normalization=ReZero\end{tabular}};
\draw[dashed] ([shift={(-.2cm,.1cm)}]enc.north west) rectangle ([shift={(.6cm,-.1cm)}]gnn.south east) {};
\node at ([xshift=.3cm]enc.east) {$L\times$};
\node[t,minimum width=3cm] (dec) at (hdec-|midx) {\tiny linear layer};
\node[t,minimum width=3cm] (soft) at (hsoft-|midx) {\tiny softmax};
\coordinate (hembgnn) at ($(emb.north)!.4!(gnn.south)$);
\node[t,fill=white] (orisig) at (hembgnn-|mid1) {\tiny \begin{tabular}{l}orig.\\enc.\end{tabular}};
\node[t,fill=white] (dstsig) at (hembgnn-|mid2) {\tiny \begin{tabular}{l}dest.\\enc.\end{tabular}};
\draw[->] (ori) -- (emb.south-|ori);
\draw[->] (dst) -- (emb.south-|dst);
\draw[->] (x1) -- (emb.south-|x1);
\draw[->] (xn) -- (emb.south-|xn);
\draw[->] (emb.north-|ori) -- (gnn.south-|ori);
\draw[->] (emb.north-|dst) -- (gnn.south-|dst);
\draw[->] (emb.north-|x1) -- (gnn.south-|x1);
\draw[->] (emb.north-|xn) -- (gnn.south-|xn);
\draw[->] (gnn.north-|ori) -- (enc.south-|ori);
\draw[->] (gnn.north-|dst) -- (enc.south-|dst);
\draw[->] (gnn.north-|x1) -- (enc.south-|x1);
\draw[->] (gnn.north-|xn) -- (enc.south-|xn);
\draw[->] (enc.north-|x1) -- (dec.south-|x1);
\draw[->] (enc.north-|xn) -- (dec.south-|xn);
\draw[->] (dec.north-|x1) -- (soft.south-|x1);
\draw[->] (dec.north-|xn) -- (soft.south-|xn);
\draw (orisig) -- (ori|-orisig) node[midway,label={[above,inner sep=0]:\tiny $+$}] {};
\draw (dstsig) -- (dst|-dstsig) node[midway,label={[above,inner sep=0]:\tiny $+$}] {};
\end{tikzpicture}
\end{subfigure}
\end{center}
\caption{\label{fig:bq-model}BQ-MDP policy model architecture for the TSP (left) and ATSP (right); origin in yellow, destination in red and the other nodes in blue.}
\end{figure*}
\noindent \textbf{Neural architecture.}
For the Euclidean path-TSP (Fig.~\ref{fig:bq-model} left), the Cartesian coordinates of each node (including origin and destination) are embedded via a linear layer. The remainder of the model is based on \citet{vaswani_attention_2017} with the following differences.
First, we do not use positional encoding since the input nodes have no order. Instead, a learnable origin (resp. destination) encoding vector is added to the feature embedding of the origin (resp. destination) node.
Second, we use ReZero~\citep{bachlechner_rezero_2021} normalization, which leads to more stable training and better performance in our experiments. 
Finally, a linear layer projects the output of the last attention block into a vector of size $N$, from which unfeasible actions, corresponding to the origin and destination nodes, are masked out, before applying a softmax operator so as to interpret the scalar node values for all allowed nodes as action probabilities. Note the absence of autoregressive decoder layers.
In the asymmetric TSP, node coordinates are not given, but the travelling cost between nodes is provided through a (possibly asymmetric) matrix.
The policy model for path-ATSP (Fig.~\ref{fig:bq-model} right) differs slightly from that for path-TSP. Initially, a randomly generated identifier is assigned to each node, which is embedded by a linear layer. The cost matrix is then incorporated by using a weighted graph convolution operation in each layer of the model before applying the attention mechanism. The edge weights of the graph are obtained by some standard normalization of the cost matrix.
The use of random node identifiers as model input is justified in ATSP by the absence of node specific features. In problems where such features are available, a random identifier can still be added as extra node feature, as that often improves performance.

\noindent \textbf{Trajectory generation.} 
We train our model by imitation of expert trajectories, using a cross-entropy loss. Such trajectories are extracted from pre-computed (near) optimal solutions for instances of a relatively small and fixed size. Even for hard problems, we leverage the fact that it is generally possible to efficiently solve {\t small} instances (e.g. using MILP solvers). Of course the quality of the learned policy will be assessed  on larger and therefore more challenging instances. 
Note that optimal solutions are not directly in the form of trajectories, i.e. sequence of construction steps. While Prop.~\ref{prop:soundness-adjoint} guarantees that a trajectory exists for any solution, it is usually not unique. 
In the TSP, an optimal tour corresponds to two possible trajectories (one being the reverse of the other). In the CVRP, each subtour similarly corresponds to two possible trajectories, and in addition, the different orders of the subtours lead to different trajectories. We noticed that this final order has an impact on the performance (see Appendix~\ref{anx:expert-solutions}). On the other hand, any sub-sequence from a fixed trajectory can be interpreted as the solution of a sub-instance. Note that these sub-instances will vary both in size and node distribution, therefore by training on them, we implicitly encourage the model to work well across sizes and node distributions, and generalize better than if such variations were not seen during the training.

\noindent \textbf{Complexity.}
Because of the quadratic complexity of self-attention, and the fact that we call our policy at each construction step, the total complexity of our model, for an instance of size $N$, is $\mathcal{O}(N^3)$, whereas closely related transformer-based models such as the TransformerTSP \citep{bresson_transformer_2021} and the Attention Model \citep{kool_attention_2019} have a total complexity of $\mathcal{O}(N^2)$. We experimentally show in Sec.~\ref{sec:experiments} that our model still provides better-quality solutions faster than these baselines. This is because most neural models are called many times per instance in practice, typically for sampling from the learned policy or within a beam search. In contrast, we observe that doing a single rollout of our policy yields excellent results. Intuitively, given the complexity (NP-hardness) of the targeted COPs, it makes sense to spend as much computation as possible at each time step, provided the whole rollout is fast enough.
On the other hand, the quadratic attention bottleneck can be addressed by using some linear attention models. In our experiments, we show that replacing the Transformer by the linear PerceiverIO \cite{jaegle_perceiver_2022} significantly accelerates inference while still providing a competitive performance.

\section{Related Work} %
\label{sec:related-work}
\noindent{\bf Generic frameworks for CO.}
Generic frameworks have been crucial in the development and adoption of CO since they allow to formulate a COP in a certain format and then give it as an input to a (black box) solver. Among the most widespread frameworks are Mixed-Integer Programming~\citep{conforti_integer_2014} and Constraint Programming~\citep{rossi_handbook_2006}. On the other hand, Markov Decision Processes are a powerful and ubiquitous tool to model sequential decision making problems and make them amenable to Machine Learning. While using MDPs to solve CO is not new, there are very few attempts to make it a general framework. \citet{khalil_learning_2017} makes a first attempt to define a generic MDP for greedy heuristics on graph problems. \citet{drori_learning_2020} also focuses on graph problems and provides a flexible and efficient GNN-based architecture to train with Reinforcement Learning (RL). Unlike these methods, our approach does not assume a graph or any structure on the COPs to define the (direct and BQ) MDP and we are the first to prove the equivalence between our proposed MDPs and solving the COP.

\noindent{\bf Neural-based constructive heuristics.}
Many NCO approaches construct solutions sequentially, via auto-regressive models. Starting with the seminal work by \citet{vinyals_pointer_2015}, which proposed the Pointer network trained in a supervised way, \citet{bello_neural_2017}
trained the same model by RL 
for the TSP and \citet{nazari_reinforcement_2018} adapted it for the CVRP.
\citet{kool_attention_2019} introduced an Attention-based encoder-decoder Model (AM) trained with RL to solve several routing problems. This architecture was reused, with a few extensions, in POMO \citep{kwon_pomo_2020} and further combined with efficient search procedures in \citep{choo_simulation-guided_2022}. TransformerTSP \citep{bresson_transformer_2021} uses a similar architecture with a different decoder for the TSP. 
While most of these auto-regressive constructive approaches used RL, another line of work focuses on learning, in a supervised way, a heatmap of solution segments:
\citet{nowak_divide_2018} trained a Graph Neural Network to output an adjacency matrix, which is converted into a feasible solution using beam search; \citet{joshi_efficient_2019} followed a similar framework and trained a Graph Convolutional Network instead, that was used by~\citep{fu_generalize_2021} as well. Recently \citet{sun_difusco_2023} proposed DIFUSCO, a graph-based diffusion model to output the heatmap. Although this non-autoregressive methods are promising for large instances since they avoid the sequential decoding, they generally rely on sophisticated search methods (such as Monte Carlo Tree Search) in order to get a high-quality solution from the heatmap. Our paper explores the less common combination of supervised (imitation) learning to train a policy (not a heatmap) and achieves excellent performance without any search. In the context of planning, a similar strategy was successfully used in~\citep{groshev_learning_2018} to learn generalized reactive policies on small planning problems instances while generalizing well to larger ones.

\noindent{\bf Step-wise approaches.}
Instead of encoding an instance once and decoding sequentially as in the AM~\citep{kool_attention_2019}, \citet{peng_deep_2020} proposed to update the encoding after each subtour completion for CVRP while \citet{xin_step-wise_2021} updated it after each node selection in TSP. \citet{xin_multi-decoder_2021} extended on this idea by introducing the Multi-Decoder Attention Model (MDAM) which contains a special layer to efficiently approximate the re-embedding process. Since MDAM constitutes the most advanced version, we employ it as a baseline in our experiments.


\noindent{\bf Generalizable NCO.}
Generalization to different instance distributions, and esp. larger instances, is regarded as one of the major challenges for current NCO approaches~\citep{joshi_learning_2022, mazyavkina_reinforcement_2021}.
\citet{fu_generalize_2021} trained a Graph Convolution model in a supervised manner on small graphs and used it to solve large TSP instances, by applying the model on sampled subgraphs and using an expensive MCTS search to improve the final solution (Att-GCN+MCTS). While this method achieves excellent generalization on TSP instances, MCTS requires a lot of computing resources and is essentially a post-learning search strategy.
\citet{geisler_generalization_2022} investigate the robustness of NCO solvers through adversarial attacks and find that existing neural solvers are highly non-robust to out-of-distribution examples and conclude that one way to address this issue is through adversarial training. In particular, \citet{xin_generative_2022} trains a GAN to generate instances that are difficult to solve for the current model. 
\citet{manchanda_generalization_2022, son_meta-sage_2023, zhou_towards_2023} take a different approach and leverage meta-learning to learn a model that is easily adaptable to new distributions. In DIMES, \citet{qiu_dimes_2022} combines meta-learning with a continuous parametrization of the candidate solutions that allows to solve large-scale TSP instances. 
Accounting for symmetries in a given COP is a powerful idea to boost the generalization performance of neural solvers. For example, for problems on Euclidean graphs, Sym-NCO~\citep{kim_sym-nco_2022} make use of solution symmetries, to enforce robust representations while \citep{kwon_pomo_2020} exploits it as part of their loss function during training and at inference time to augment the set of solutions. Note that the above adversarial or meta-learning strategies and data augmentation are orthogonal to our approach of providing an efficient MDP which has beneficial effects irrespective of the training method. Finally, some hybrid methods have been successful in solving large-scale routing problems \cite{li_learning_2021, zong_rbg_2022, hou_generalize_2022}; these ate based on the divide-and-conquer principle: they learn how to split large instances into smaller ones, that are then efficiently solved by external specialized solvers.

\section{Experimental Evaluation} %
\label{sec:experiments}
To evaluate the effectiveness of our method, we test it on four routing problems - Euclidean TSP, CVRP, OP and Asymmetric TSP and one non-routing problem - KP (details and results for the latter are in Appendix~\ref{anx:kp}). For all routing problems, we train our model and all baselines on synthetic instances of size 100, following the same procedure of data generation as in \cite{kool_attention_2019}. For asymmetric TSP, we use Euclidean distance matrices and randomly break symmetries. We choose graphs of size 100 because it is the largest size for which (near) optimal solutions are still reasonably fast to obtain, and such training datasets are commonly used in the literature. Then, we test the trained models on synthetic instances of size 100, 200, 500 and 1K generated from the same distribution, as well as the standard \href{http://comopt.ifi.uni-heidelberg.de/software/TSPLIB95/}{TSPLib} and \href{http://vrp.galgos.inf.puc-rio.br/index.php/en/}{CVRPLib} datasets. Except for perceiver-based models, we limit the subgraphs to the 250 nearest neighbours of the origin node at test time. This reduces inference time while not significantly impacting the solution quality (see Appendix~\ref{anx:knns}).

\noindent\textbf{Hyperparameters and training.}
We use the same hyperparameters for all problems. The model has 9 layers, each built with 12 attention heads with embedding size of 192 and dimension of feed-forward layer of 512. Our model is trained by imitation of expert trajectories, using a cross-entropy loss. Solutions of these problems are obtained by using the Concorde solver \citep{applegate_concorde_2015} for TSP, LKH heuristic \cite{helsgaun_extension_2017} for ATSP and CVRP, and EA4OP heuristic~\cite{kobeaga_efficient_2018} for OP. We use a dataset of 1 million solutions. To sample trajectories out of this dataset, we note that in the case of TSP, any sub-path of the optimal tour is also an optimal solution to the associated path-TSP sub-problem, hence amenable to our path-TSP model. We therefore form minibatches by first sampling a number $n$ between $4$ and $N$ (path-TSP problems with less than $4$ nodes are trivial), then sampling sub-paths of length $n$ from the initial solution set. At each epoch, we sample a sub-path from each solution. By sampling subsequences among all possible infixes of the optimal solutions, we automatically get an augmented dataset, which some previous models had to explicitly design their model for~\citep{kwon_pomo_2020}. We use a similar sampling strategy for CVRP, OP and KP (see Appendix~.\ref{anx:cvrp}, \ref{anx:op} and~\ref{anx:kp}). 
Batches of size 1024 are formed, and the model is trained for 500 epochs. We use Adam~\citep{kingma_adam_2015} as optimizer with an initial learning rate of $7.5\mathrm{e}{-4}$ and decay of 0.98 every 50 epochs. 
 
\noindent\textbf{Baselines and test datasets.}
We compare our model with the following methods: OR-Tools \citep{perron_or-tools_2022}, LKH \citep{helsgaun_extension_2017}, and Hybrid Genetic Search (HGS) for the CVRP~\citep{vidal_hybrid_2022} as SOTA non-neural methods; DIFUSCO+2opt, Att-GCN+MCTS and SGBS as hybrid methods; and AM, TransformerTSP, MDAM, POMO, DIMES and Sym-NCO as deep learning-based constructive methods. Description of all baseline models are provided in Sec.~\ref{sec:related-work}. 
For all deep learning baselines we use the corresponding model \textit{trained on graphs of size 100} and the best decoding strategy\footnote{Att-GCN+MCTS is originally trained on graph sizes of 50. For DIMES and DIFUSCO authors provide generalization tables just for two types of decoding (greedy+2opt for DIFUSCO and RL+sampling for DIMES) and on some test datsets, so we report them in our results}. 
Following the same procedure as in \cite{fu_generalize_2021}, we generate four test datasets with graphs of sizes 100, 200, 500 and 1000. 
For CVRP, we use capacities of 50, 80, 100 and 250, respectively. In addition, we report the results on TSPLib instances with up to 4461 nodes and all CVRPLib instances with node coordinates in the Euclidean space. 
For all models, we report the optimality gap and the inference time. The optimality gap for TSP is based on the optimal solutions obtained with Concorde. For CVRP, although HGS gives better results than LKH, we use the LKH solutions as a reference to compute the "optimality" gap, in order to be consistent (and easily comparable) with previous works. While the optimality gap is easy to compute and compare, running times are harder to compare since they strongly depend on the implementation platforms (Python, C++), hardware (GPU, CPU), parallelization, batch size, etc. 
In our experiments, we run all deep learning models on a single Nvidia Tesla V100-S GPU with 24GB memory, and other solvers on Intel(R) Xeon(R) CPU E5-2670 with 256GB memory, in one thread.

\noindent\textbf{Results.}
Table \ref{tab:results} summarizes our results for Euclidean TSP, CVRP and OP as well as asymmetric TSP. For all problems, our model shows superior generalization on larger graphs, even with a greedy decoding strategy, which generates a single solution while all other baselines generate several hundreds (and select the best among them). In terms of running time, with greedy decoding, our model is slightly slower that Sym-NCO baseline, competitive with POMO, and significantly faster than other models. Beam search further improves the optimality gaps, but as expected, it takes more time. On the other hand, our model can be accelerated by replacing the quadratic attention blocks by the PerceiverIO architecture \citep{jaegle_perceiver_2022} (more details in Appendix~\ref{anx:perceiver}). This results in a considerable reduction of inference time for larger graphs, at the cost of some performance drop. Even with this trade-off, our model achieves remarkable performance compared with other NCO baselines. Figure~\ref{fig:optgap_time} shows optimality gap versus running time for our models and other baseline models. Our models clearly outperform other models in terms of generalization to larger instances. The only models that are competitive with ours are Att-GCN+MCTS and DIFUSCO+2opt, but both are hybrid methods with an expensive search on top of the output of the neural network, and they are 2 to 15 times slower while being designed for TSP only. 
Finally, in addition to the synthetic datasets, we test our models on TSPLib and VRPLib instances, which are of varying graph sizes, node distributions, demand distributions and vehicle capacities. Table~\ref{tab:results} shows that our model significantly outperforms the end-to-end baseline policies even with the greedy decoding strategy.

\begin{figure*}[h!]
    \centering
    \includegraphics[width=0.8\textwidth]{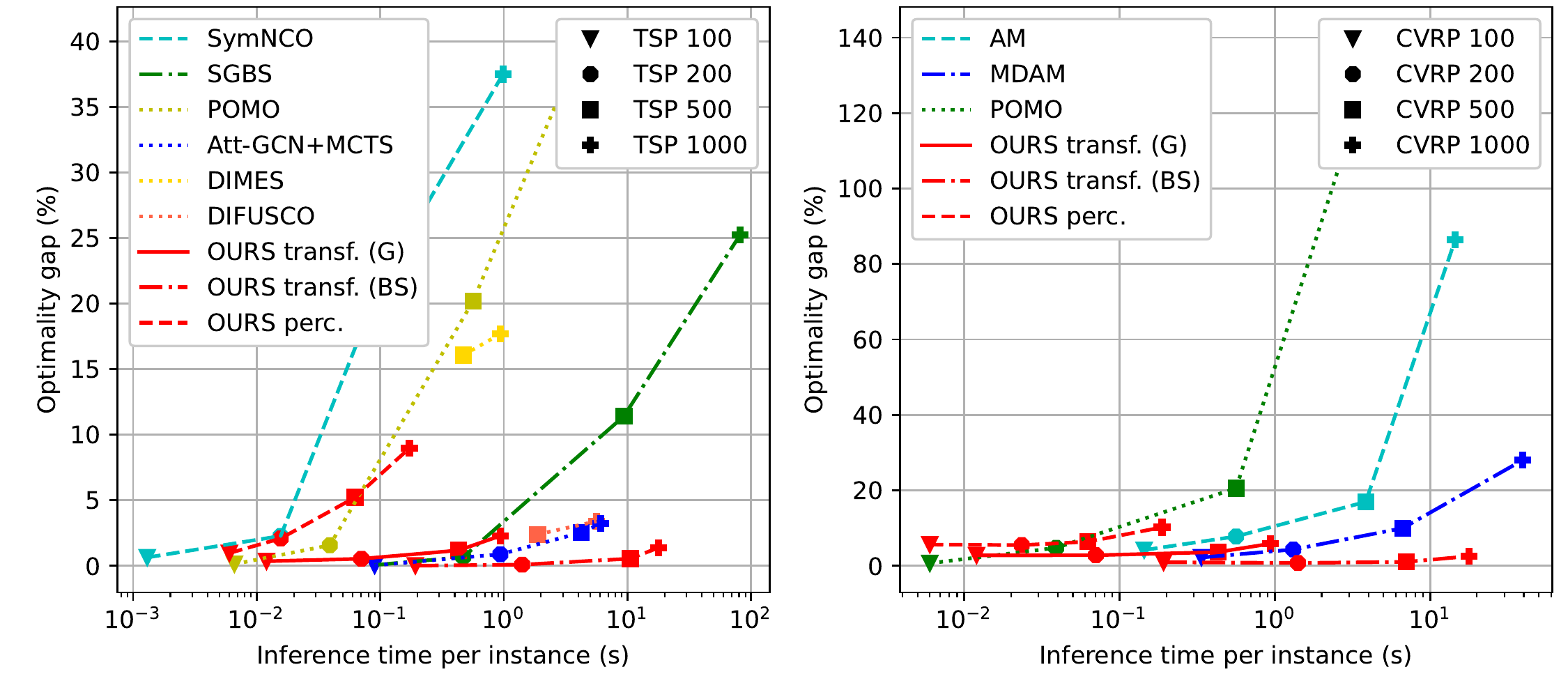}
    \caption{Generalization results on different graph sizes for TSP (left) and CVRP (right). Lower and further left is better.}
    \label{fig:optgap_time}
\end{figure*}


\begin{table}[h]
\centering
\resizebox{\textwidth}{!}{%

\begin{tabular}{c l | r r | r r | r r | r r}
& & \multicolumn{2}{c|}{Test (10k inst.)} & \multicolumn{6}{c}{Generalization (128 instances)} \\ 
& & \multicolumn{2}{c|}{N{=}100} & \multicolumn{2}{c|}{N{=}200} & \multicolumn{2}{c|}{N{=}500} & \multicolumn{2}{c}{N{=}1000} \\ 
&  & opt. gap & time & opt. gap & time & opt. gap & time & opt. gap & time \\ \hline\hline
\parbox[t]{2mm}{\multirow{14}{*}{\rotatebox[origin=c]{90}{TSP}}}
& Concorde & 0.00\% & 38m & 0.00\% & 2m & 0.00\% & 40m & 0.00\% & 2.5h \\
& OR-Tools & 3.76\% & 1.1h & 4.52\% & 4m & 4.89\% & 31m & 5.02\% & 2.4h \\ \cline{2-10}
& Att-GCN+MCTS$^*$ & 0.04\% & 15m & 0.88\% & 2m & 2.54\% & 6m & 3.22\% & 13m \\ 
& DIFUSCO G+2opt$^*$ & 0.24\% & - & - & - & 2.40\% & 4m & 3.40\% & 12m \\
& SGBS (10,10) & 0.06\% & 15m & 0.67\% & 1m & 11.42\% & 20m & 25.25\% & 2.9h\\
\cline{2-10}
& AM bs1024 & 2.51\% & 20m & 6.18\% & 1m & 17.98\% & 8m & 29.75\% & 31m\\
& TransTSP bs1024 &  0.46\% & 51m & 5.12\% & 1m & 36.14\% & 9m & 76.21\% & 37m\\
& MDAM bs50 &  0.39\% & 45m & 2.04\% & 3m & 9.88\% & 13m & 19.96\% & 1.1h\\
& POMO augx8 &  0.13\% & 1m & 1.57\% & 5s & 20.18\% & 1m & 40.60\% & 10m \\ 
& DIMES RL+S$^*$ & - & - & - & - & 16.07\% & 1m & 17.69\% & 2m\\
& Sym-NCO s100 & 0.64\% & 13s & 2.28\% & 2s & 21.64\% & 13s & 37.51\% & 2m\\
\cline{2-10}
& \textbf{BQ-perceiver G} & 0.97\% & 1m & 2.09\% & 2s & 5.22\% & 8s & 8.97\% & 22s\\ 
& \textbf{BQ-transformer G} & 0.35\% & 2m & 0.54\% & 9s & 1.18\% & 55s & 2.29\% & 2m\\
& \textbf{BQ-transformer bs16} & \textbf{0.01}\% & 32m & \textbf{0.09}\% & 3m & \textbf{0.55}\% & 15m & \textbf{1.38}\% & 38m\\
\hline\hline

\parbox[t]{2mm}{\multirow{11}{*}{\rotatebox[origin=c]{90}{CVRP}}} & LKH & 0.00\% & 15.3h & 0.00\% & 30m & 0.00\% & 1.3h & 0.00\% & 2.8h \\
& HGS & -0.51\% & 15.3h & -1.02\% & 30m & -1.25\% & 1.3h & -1.10\% & 2.8h \\
& OR-Tools & 9.62\% & 1h & 10.70\% & 3m & 11.40\% & 18m & 13.56\% & 43m \\\cline{2-10}
& AM bs1024 & 4.18\% & 24m & 7.79\% & 1m & 16.96\% & 8m & 86.41\% & 31m \\
& MDAM bs50 & 2.21\% & 56m & 4.33\% & 3m & 9.99\% & 14m & 28.01\% & 1.4h \\
& POMO augx8 & 0.69\% & 1m & 4.77\% & 5s & 20.57\% & 1m & 141.06\% & 10m\\
& Sym-NCO s100 & 1.46\% & 15s & 4.84\% & 3s & 18.64\% & 13s & 119.60\% & 9m\\
& SGBS (4,4) & \textbf{0.08\%} & 15m & 2.47\% & 1m & 14.98\% & 20m & 151.07\% & 2.9h \\ \cline{2-10}
& \textbf{BQ-perceiver G}  & 5.63\% & 1m & 5.49\% & 3s & 6.39\% & 8s & 10.21\% & 24s\\ 
& \textbf{BQ-transformer G} & 2.79\% & 2m & 2.81\% & 9s & 3.64\% & 55s & 5.88\% & 2m \\
& \textbf{BQ-transformer bs16}  & 0.95\% & 32m & \textbf{0.77}\% & 3m & \textbf{1.04\%} & 15m & \textbf{2.55\%} & 38m\\
\hline\hline 
\parbox[t]{2mm}{\multirow{6}{*}{\rotatebox[origin=c]{90}{OP}}} 
& EA4OP & 0.00\% & 1.4h & 0.00\% & 3m & 0.00\% & 11m & 0.00\% & 45m \\ 
\cline{2-10}
& AM bs1024 & 2.19\% & 17m & 8.04\% & 1m & 20.73\% & 4m & 29.96\% & 15m \\
& MDAM bs50$^\dagger$ & 0.69\% & 32m & 2.98\% & 49s & 15.36\% & 4m & 26.55\% & 15m \\
& SymNCO s200$^\dagger$ & 0.45\% & 4m & 3.39\% & 2s & 15.23\% & 37s & 26.95\% & 2m\\
\cline{2-10}
& \textbf{BQ-transformer G} & 0.22\% & 2m & 1.02\% & 28s & 7.32\% & 42s & 13.12\% & 1m \\
& \textbf{BQ-transformer bs16} & \textbf{-1.13}\% & 35m & \textbf{-1.04}\% & 2m & \textbf{4.10\%} & 10m & \textbf{10.69\%} & 17m \\ \hline\hline
\parbox[t]{2mm}{\multirow{4}{*}{\rotatebox[origin=c]{90}{ATSP}}}
& LKH & 0.00\% & 18m & 0.00\% & 29s & 0.00\% & 2.3m & 0.00\% & 9.4m \\
\cline{2-10}
& MatNet s128$\dagger$& \textbf{0.93}\% & 45m & 124.20\% & 4m & - & - & - \\ 
\cline{2-10}
& \textbf{BQ-GCN-transformer G} & 1.27\% & 1m & 1.56\% & 4s & 4.22\% & 12s & 11.59\% & 14s\\ 
& \textbf{BQ-GCN-transformer bs16} & 0.96\% & 19m & \textbf{1.41}\% & 1m & \textbf{2.43}\% & 3m & \textbf{8.26}\% & 7m\\ 
\end{tabular}} \vspace{0.1cm}
\caption{\label{tab:results}BQ-models with greedy rollouts (G) or Beam Search (bs) versus classic and neural baselines. The results for models$^*$ are taken from the original papers. For a fair comparison, models$^\dagger$ are evaluated on instances generated from the same distribution as the training datasets for those models, which may differ from our training and/or test distribution.}
\end{table}

\begin{table}[h]
\resizebox{\textwidth}{!}{%
\begin{tabular}{lrrrr}
& \multicolumn{1}{c}{MDAM} & \multicolumn{1}{c}{POMO} & \multicolumn{2}{c}{BQ (ours)}\\
\multicolumn{1}{c}{Size} & \multicolumn{1}{c}{bs50} & \multicolumn{1}{c}{x8} & \multicolumn{1}{c}{greedy} & \multicolumn{1}{c}{bs16}\\
\hline
$<$100 & 3.06\% & 0.42\% & 0.34\% & \textbf{0.06}\% \\
100-200 & 5.14\% & 2.31\% &1.99\% & \textbf{1.21}\% \\
200-500 & 11.32\% & 13.32\% &2.23\% & \textbf{0.92}\% \\
500-1K & 20.40\% & 31.58\% &2.61\% & \textbf{1.91}\% \\
$>$1K & 40.81\% & 62.61\% &6.42\% & \textbf{5.90}\% \\ \hline
All & 19.01\% & 26.30\% & 3.30\% & \textbf{2.55\%}
\end{tabular}
\hspace{0.5cm}
\begin{tabular}{lrrrr}
& \multicolumn{1}{c}{MDAM} & \multicolumn{1}{c}{POMO} & \multicolumn{2}{c}{BQ (ours)}\\
\multicolumn{1}{c}{Set (size)} & \multicolumn{1}{c}{bs50} & \multicolumn{1}{c}{augx8} & \multicolumn{1}{c}{greedy} & \multicolumn{1}{c}{bs16}\\
\hline
A (32-80) & 6.17\% & 4.86\% & 4.52\% & \textbf{1.18}\% \\
B (30-77) & 8.77\% & 5.13\% & 6.36\% & \textbf{2.48}\% \\
F (44-134) & 16.96\% & 15.49\% & 6.63\% & \textbf{5.94}\% \\
M (100-200) & 5.92\% & 4.99\% & 3.58\% & \textbf{2.66}\% \\
P (15-100) & 8.44\% & 14.69\% & 3.61\% & \textbf{1.54}\% \\
X (100-1K) & 34.17\% & 21.62\% & 9.94\% & \textbf{7.22}\% \\ \hline
All (15-1K) & 22.36\% & 15.58\% & 7.62\% & \textbf{4.83}\% \\
\end{tabular}}
\caption{\label{tab:tsplib_cvrplib}Experimental results on TSPLib (left) and CVRPLib (right).}
\end{table}

\section{Conclusion} %
We proposed a principled framework to formulate COPs as efficient symmetry-invariant MDPs. 
First, we presented how to derive a direct MDP for any COP that is amenable to a constructive approach, and proved the exact correspondence between the optimal policies of this direct MDP and the optimal solutions of the COP it stems from. 
Second, for the widespread class of recursive COPs, we introduced bisimulation quotienting to leverage the symmetries of the partial solutions. 
We showed that the resulting BQ-MDP has a significantly reduced state space while its optimal policies still exactly solve the original COP.
We then designed a simple Transformer-based architecture as an examplification of a possible policy network for the BQ-MDPs, and applied it to five classical COP problems: (A)TSP, CVRP, OP and KP. 
The resulting policy generalizes particularly well, and significantly outperforms not only similar end-to-end neural-based constructive solvers but also hybrid methods that combine learning with search. 
A limitation of our approach, shared with all constructive methods, is that it can only handle problems for which feasibility is easy to ensure throughout the construction process. In order to address problems with more challenging global constraints, approaches which combine learning and constraint programming (e.g. \cite{cappart_combining_2020}) may be a promising direction.
While training on relatively small instances allowed us to use imitation learning, our proposed MDPs and the associated model could also be trained with Reinforcement Learning. This would be particularly interesting to extend our framework from deterministic to stochastic COPs.

\clearpage

\bibliographystyle{plainnat}
\bibliography{main}

\clearpage
\appendix
\section{Application to the Capacitated Vehicle Routing Problem}
\label{anx:cvrp}
\paragraph{Problem definition.}
The Capacitated Vehicle Routing Problem (CVRP) is a vehicle routing problem in which a vehicle with limited \emph{capacity} must deliver items from a depot location to various customer locations. Each customer has an associated \emph{demand}, and the goal is to compute a set of subtours for the vehicle, starting and ending at the depot, such that all the customers are visited, the sum of the demands per subtour of the vehicle does not exceed the capacity, and the total travelled distance is minimized.



\paragraph{Solution space.}
Formally, a partial solution (in $\MX$) for CVRP, just as for TSP, is a finite sequence of nodes. Similarly, the $\circ$ operator is sequence concatenation and the neutral element $\epsilon$ is the empty sequence. In a CVRP instance, as in TSP, each node is assigned a location, and the objective function $f$ for an arbitrary sequence of nodes is the total travelled distance for a vehicle visiting the nodes in sequence:
\[
f(x_{1:n}) \;=\; {\sum}_{i=2}^n\textrm{dist}(x_{i-1},x_i).
\]
The feasible set $X$ consists of the sequences $x_{1:n}$ of nodes which start and end at the depot, which are pairwise distinct except for the depot, and such that the cumulated demand of any contiguous subsequence $x_{i:j}$ not visiting the depot, i.e. a segment of a subtour, does not exceed the capacity of the vehicle:
\[
x_{1:n}\in X
\textrm{ iff }
\left\{
\begin{array}{l}
x_1=x_n=\textrm{depot},\\
\forall i,j\in\{1{:}n\}\;\;\;
x_i = x_j\not=\textrm{depot} \;\implies\; i=j,\\
\forall i,j\in\{1{:}n\}\;\;\;
\forall k\in\{i{:}j\}\;x_k\not=\textrm{depot} \;\implies\; {\sum}_{k=i}^j\textrm{demand}(x_k)\leq\textrm{capacity}.
\end{array}
\right.
\]

Just as with TSP, CVRP on its own does not satisfy the tail-recursion property of Sec.~\ref{sec:recursion}, but is a particular case of a more general problem called \textit{path}-CVRP (similar to path-TSP) which does satisfy that property. In path-CVRP, instead of starting at the depot with its full capacity, the vehicle starts at an {\em origin} node with a given {\em initial capacity}. 
A CVRP instance is a path-CVRP instance where origin and depot are the same and the initial capacity is the full capacity. In path-CVRP, each tail subproblem after selection of a node $z$ updates both the origin (which becomes $z$) and the initial capacity (which is decremented by the demand at $z$ if $z$ is a customer node or reset to the full capacity if $z$ is the depot), conditioned on the resulting capacity being non negative.

\paragraph{Bisimulation quotienting.}
Given the above solution space, we can directly apply the definitions of Sec~\ref{sec:bq-mdp} to define the bisimulation and the resulting BQ-MDP for path-CVRP, illustrated below:

\begin{center}
\begin{tikzpicture}[scale=2,n/.style={circle,fill,blue,inner sep=1},nn/.style={circle,fill,black,inner sep=1}]
\begin{scope}[local bounding box=a]
\node at (-.74,-.51) {}; \node at (.89,.71) {};
\node[circle,draw,inner sep=1,label={[right,orange]\tiny $C{=}12$}] (d) at (.07,.12) {};
\node[isosceles triangle,fill,green,inner sep=1,label={[below=1pt, right=-1pt,orange]\tiny $c{=}7$}] (a0) at (.16,.38) {};
\node[nn,label={[above=1pt, left,orange]\tiny 1}] (a1) at (-.12,.54) {};
\node[nn,label={[above=1pt, left,orange]\tiny 4}] (a2) at (-.37,.45) {};
\node[nn,label={[below=3pt, left,orange]\tiny 3}] (a3) at (-.17,.21) {};
\node[nn,label={[above=1pt, left,orange]\tiny 2}] (a4) at (-.66,-.51) {};
\node[nn,label={[above=1pt, left,orange]\tiny 4}] (a5) at (-.04,-.2) {};
\node[nn,label={[above=1pt, right=-1pt,orange]\tiny 1}] (a6) at (.50,-.46) {};
\draw[->,red] (a0) -- (a1) -- (a2) -- (d) -- (a3) -- (a4) -- (a5) -- (a6);
\node[n,label={[above=1pt, left,orange]\tiny 1}] at (.89,.08) {};
\node[n,label={[above=1pt, left,orange]\tiny 4}] at (.55,.49) {};
\node[n,label={[above=1pt, left,orange]\tiny 3}] at (.69,.71) {};
\draw ($(a.north east)+(.03,.03)$) rectangle ($(a.south west)+(-.03,-.03)$);
\node[anchor=north west] at ($(a.north west)$) {\tiny Direct MDP state};
\end{scope}
\begin{scope}[xshift=3cm,local bounding box=b]
\node at (-.74,-.51) {}; \node at (.89,.71) {};
\node[circle,draw,inner sep=1,label={[left,orange]\tiny $C{=}12$}] at (.07,.12) {};
\node[isosceles triangle,fill,green,inner sep=1,label={[below=1pt, right=-1pt,orange]\tiny $c{=}2$}] (b6) at (.50,-.46) {};
\node[n,label={[above=1pt, left,orange]\tiny 1}] at (.89,.08) {};
\node[n,label={[above=1pt, left,orange]\tiny 4}] at (.55,.49) {};
\node[n,label={[above=1pt, left,orange]\tiny 3}] at (.69,.71) {};
\draw ($(b.north east)+(.03,.03)$) rectangle ($(b.south west)+(-.03,-.03)$);
\node[anchor=north west] at ($(b.north west)$) {\tiny BQ-MDP state};
\end{scope}
\draw[->] (a.east) -- (b.west) node [midway,above] {$\boldsymbol{\Phi}$};
\end{tikzpicture}
\end{center}
The diagram on the left represents a direct MDP state, i.e. a path-CVRP instance together with a partial solution. The depot is represented as a white disk with its full capacity ($C{=}12$), the origin as green triangle with initial capacity ($c{=}7$), customer nodes as blue or black disks with their demands. The partial solution is the sequence of black nodes represented by the (directed) red path. The diagram on the right represents the corresponding BQ-MDP state, i.e. a path-CVRP instance. Note the new origin (the end node of the partial solution), and its initial capacity $c{=}2$: it is the full capacity $C{=}12$ minus the cumulated demand served since the last visit to the depot ($3{+}2{+}4{+}1$)

\paragraph{Model architecture.}
The model architecture for the CVRP is almost the same as for the TSP, with a slight difference in the input and output layers. In the TSP model, the input to the node embedding layer for a $N$-node state is a $2{\times}N$ matrix of coordinates. For CVRP, we use two additional channels: one for the node's demand, and one for the current vehicle capacity, repeated across all nodes. The demand is set to zero for the origin and depot nodes. We obtain a $4{\times}N$ matrix of features, which is passed through a learned embedding layer. As for the TSP, a learned origin (resp. depot) encoding vector is added to the corresponding node embeddings. The rest of the architecture, in the form of attention layers, is identical to TSP, until after the action scores projection layer. In the case of TSP, the projection layer returns a vector of $N$ scores, where each score, after a softmax, represents the probability of choosing that node as the next step in the construction. In the case of CVRP, the model returns a matrix of scores of dimension $N {\times} 2$, corresponding to each possible actions and the softmax scopes over this whole matrix. An action is here either the choice of the next node, as in TSP, or of the next two nodes, the first one being the depot 
As usual, a mask is always applied to unfeasible actions before the softmax operator: those which have higher demand than the remaining vehicle capacity, as well as the origin and depot nodes.


\section{Application to the Orienteering Problem}
\label{anx:op}
\paragraph{Problem definition.}
The Orienteering Problem (OP) is a combinatorial optimization problem in which we need to find the optimal route to visit a set of given locations within a given distance (or time) limit. The route must start and end at a given location (usually called a depot), each location is associated with a scalar prize, and the goal is to maximize the cumulated prize, respecting the distance (time) constraint. This problem has applications in various fields, including logistics and planning.

\paragraph{Solution space and bisimulation quotienting.} After defining BQ-MDP for path-TSP and path-CVRP, defining BQ-MDP for path-OP is straightforward. The partial solution is the sequence of already visited nodes, together with remaining distance limit. An OP instance is a path-OP instance where origin and destination are the same (depot) and distance limit equals the initial distance constraint. Just as in path-TSP and path-CVRP, in path-OP, each tail subproblem, after selecting a node $z$, updates both the origin (which becomes $z$) and the remaining distance constraint (the current distance constraint is decreased by the distance from the previous origin to the selected node $z$). However, in OP, unlike TSP and CVRP, the number of steps is not known in advance - for TSP and CVRP, construction of solution ends when all locations are visited, whereas in OP, it ends when a given distance budget is exceeded.

\paragraph{Model architecture.}
The model architecture for path-OP is the same as for path-TSP, with two additional input channels for node prize and distance constraint (repeated across all nodes, as the total capacity in CVRP). Thus, we obtain a $4{\times}N$ matrix of features (two for coordinates, one for prize and one for distance constraint), which is passed through a learned embedding layer. As usual, a learned encoding for origin and depot nodes are added to the corresponding node embeddings. The rest of the architecture, including output projection layer is the same as for TSP. Before applying the softmax operator, we apply a mask to exclude origin and depot nodes, as well as nodes that cannot be visited due to the distance constraint.

\section{Application to the Knapsack Problem}
\label{anx:kp}
The knapsack problem and its solution space were described in Sec.~\ref{sec:solution-space}; the associated bisimulation in Sec~\ref{sec:recursion}. We provide below an illustration of the bisimulation on an example:

\tikzset{every picture/.style={line width=0.75pt}} 
\begin{tikzpicture}[x=0.75pt,y=0.75pt,yscale=-1,xscale=1]

\draw  [draw opacity=0] (77,34) -- (298,34) -- (298,74) -- (77,74) -- cycle ; \draw   (77,34) -- (77,74)(97,34) -- (97,74)(117,34) -- (117,74)(137,34) -- (137,74)(157,34) -- (157,74)(177,34) -- (177,74)(197,34) -- (197,74)(217,34) -- (217,74)(237,34) -- (237,74)(257,34) -- (257,74)(277,34) -- (277,74)(297,34) -- (297,74) ; \draw   (77,34) -- (298,34)(77,54) -- (298,54)(77,74) -- (298,74) ; \draw    ;
\draw  [draw opacity=0] (374,34) -- (535,34) -- (535,75) -- (374,75) -- cycle ; \draw   (374,34) -- (374,75)(394,34) -- (394,75)(414,34) -- (414,75)(434,34) -- (434,75)(454,34) -- (454,75)(474,34) -- (474,75)(494,34) -- (494,75)(514,34) -- (514,75)(534,34) -- (534,75) ; \draw   (374,34) -- (535,34)(374,54) -- (535,54)(374,74) -- (535,74) ; \draw    ;
\draw    (297,54) -- (372,54) ;
\draw [shift={(374,54)}, rotate = 180] [color={rgb, 255:red, 0; green, 0; blue, 0 }  ][line width=0.75]    (15.3,-4.61) .. controls (9.73,-1.96) and (4.63,-0.42) .. (0,0) .. controls (4.63,0.42) and (9.73,1.96) .. (15.3,4.61)   ;

\draw (80,57) node [anchor=north west][inner sep=0.75pt]    {$3$};
\draw (100,57) node [anchor=north west][inner sep=0.75pt]  [color={rgb, 255:red, 208; green, 2; blue, 27 }  ,opacity=1 ]  {$7$};
\draw (120,57) node [anchor=north west][inner sep=0.75pt]    {$9$};
\draw (140,57) node [anchor=north west][inner sep=0.75pt]  [color={rgb, 255:red, 208; green, 2; blue, 27 }  ,opacity=1 ]  {$1$};
\draw (160,57) node [anchor=north west][inner sep=0.75pt]    {$1$};
\draw (180,57) node [anchor=north west][inner sep=0.75pt]  [color={rgb, 255:red, 208; green, 2; blue, 27 }  ,opacity=1 ]  {$2$};
\draw (200,57) node [anchor=north west][inner sep=0.75pt]    {$4$};
\draw (220,57) node [anchor=north west][inner sep=0.75pt]    {$5$};
\draw (240,57) node [anchor=north west][inner sep=0.75pt]    {$8$};
\draw (260,57) node [anchor=north west][inner sep=0.75pt]    {$8$};
\draw (280,57) node [anchor=north west][inner sep=0.75pt]    {$6$};
\draw (24,57) node [anchor=north west][inner sep=0.75pt]  [font=\small] [align=left] {weights};
\draw (30,36) node [anchor=north west][inner sep=0.75pt]  [font=\small] [align=left] {values};
\draw (80,36) node [anchor=north west][inner sep=0.75pt]    {$1$};
\draw (100,36) node [anchor=north west][inner sep=0.75pt]  [color={rgb, 255:red, 208; green, 2; blue, 27 }  ,opacity=1 ]  {$9$};
\draw (120,36) node [anchor=north west][inner sep=0.75pt]    {$2$};
\draw (140,36) node [anchor=north west][inner sep=0.75pt]  [color={rgb, 255:red, 208; green, 2; blue, 27 }  ,opacity=1 ]  {$8$};
\draw (160,36) node [anchor=north west][inner sep=0.75pt]    {$3$};
\draw (180,36) node [anchor=north west][inner sep=0.75pt]  [color={rgb, 255:red, 208; green, 2; blue, 27 }  ,opacity=1 ]  {$7$};
\draw (200,36) node [anchor=north west][inner sep=0.75pt]    {$1$};
\draw (220,36) node [anchor=north west][inner sep=0.75pt]    {$6$};
\draw (240,36) node [anchor=north west][inner sep=0.75pt]    {$7$};
\draw (260,36) node [anchor=north west][inner sep=0.75pt]    {$3$};
\draw (280,36) node [anchor=north west][inner sep=0.75pt]    {$9$};
\draw (75,20) node [anchor=north west][inner sep=0.75pt]   [align=left] {$C=20$};
\draw (380,57) node [anchor=north west][inner sep=0.75pt]    {$3$};
\draw (400,57) node [anchor=north west][inner sep=0.75pt]    {$9$};
\draw (420,57) node [anchor=north west][inner sep=0.75pt]    {$1$};
\draw (440,57) node [anchor=north west][inner sep=0.75pt]    {$4$};
\draw (460,57) node [anchor=north west][inner sep=0.75pt]    {$5$};
\draw (480,57) node [anchor=north west][inner sep=0.75pt]    {$8$};
\draw (500,57) node [anchor=north west][inner sep=0.75pt]    {$8$};
\draw (520,57) node [anchor=north west][inner sep=0.75pt]    {$6$};
\draw (380,36) node [anchor=north west][inner sep=0.75pt]    {$1$};
\draw (400,36) node [anchor=north west][inner sep=0.75pt]    {$2$};
\draw (420,36) node [anchor=north west][inner sep=0.75pt]    {$3$};
\draw (440,36) node [anchor=north west][inner sep=0.75pt]    {$1$};
\draw (460,36) node [anchor=north west][inner sep=0.75pt]    {$6$};
\draw (480,36) node [anchor=north west][inner sep=0.75pt]    {$7$};
\draw (500,36) node [anchor=north west][inner sep=0.75pt]    {$3$};
\draw (520,36) node [anchor=north west][inner sep=0.75pt]    {$9$};
\draw (372,20) node [anchor=north west][inner sep=0.75pt]   [align=left] {$C=10$};
\draw (157,77) node [anchor=north west][inner sep=0.75pt]   [align=left] {KP direct MDP state};
\draw (414,77) node [anchor=north west][inner sep=0.75pt]   [align=left] {KP BQ-MDP state};
\draw (331,40) node [anchor=north west][inner sep=0.75pt]    {$\boldsymbol{\Phi} $};
\end{tikzpicture}
Here, the knapsack capacity is $C=20$ and each item is defined by its weight (bottom cell) and value (top cell). Mapping $\boldsymbol{\Phi}$ for KP is straightforward: it removes all picked items and updates the remaining capacity by subtracting total weight of removed items from the previous capacity.

\paragraph{Model architecture.}
The model architecture for KP is again very similar to previously described models for TSP, CVRP and OP. The input to the model is a $3\times N$ tensor composed of items features (values, weights) and the additional channel for the remaining knapsack's capacity. By definition, the solution has no order (the result is a set of items), so there is no need to add tokens for origin and destination. Apart from excluding these tokens and different input dimensions, the rest of the model is identical to the TSP model. The output is a vector of $N$ probabilities over all items with a mask over the unfeasible ones (with weights larger than remaining knapsack's capacity). In the training, at each construction step, any item of the ground-truth solution is a valid choice. Therefore we use a multi-class cross-entropy loss.

\paragraph{Experimental results for KP.}
We generate the training dataset as described in \cite{kwon_pomo_2020}. We train our model on 1M KP instances of size 200 and capacity 25, with values and weights randomly sampled from the unit interval. We use the dynamic programming algorithm from ORTools to compute the ground-truth optimal solutions. As hyperparameters, we use the same as for the TSP, except the training is shorter - it converges in just 50 epochs. Then, we evaluate our model on test datasets with 200, 500 and 1000 items and capacity of 10, 25, 50 and 100, for each problem size. Table \ref{tab:KP} shows the performance of our model compared to POMO, one of the best performing NCO models on KP. Although our model does not outperform it on all datasets, it achieves better overall performance and significantly better performance on the out-of-distribution datasets (datasets of size 1000 and datasets with a capacity of 10). It should be noted again that POMO builds $N$ solutions per instance and chooses the best one, while our model generates a single solution per instance but still achieves better results. 

\begin{table*}
    \centering
    \begin{tabular}{l l | r | r r | r r | r r}
          \multicolumn{2}{c|}{} & \multicolumn{1}{c|}{Optimal} & \multicolumn{2}{c|}{POMO (single traj.)} & \multicolumn{2}{c|}{POMO (all traj.)} & \multicolumn{2}{c}{BQ \textbf{ours} (greedy)} \\
          \multicolumn{2}{c|}{} & \multicolumn{1}{c|}{value} & \multicolumn{1}{c}{value} & \multicolumn{1}{c|}{optgap} & \multicolumn{1}{c}{value} & \multicolumn{1}{c|}{optgap} & \multicolumn{1}{c}{value} & \multicolumn{1}{c}{optgap}  \\
          \hline \hline
          \multirow{4}{*}{N=200} & C=10 & 36.073 & 34.062 & 5.565\%& 34.961& 3.076\%& \textbf{35.961}& \textbf{0.311}\%\\
           & C=25 & 57.429 & 57.143& 0.499\%& \textbf{57.420}& \textbf{0.016}\%& 57.371& 0.102\%\\
           & C=50 & 81.100& 79.766& 1.617\%& 80.085 & 1.229\%& \textbf{80.564}& \textbf{0.668}\%\\
           & C=100 & 99.773& 99.416& 0.358\%& 99.483& 0.291\%& \textbf{99.694} & \textbf{0.080}\%\\ \hline
          \multirow{4}{*}{N=500} & C=10 & 57.456& 51.829& 9.769\%& 54.213& 5.627\%& \textbf{56.853}& \textbf{1.054}\%\\
           & C=25 & 91.026 & 85.186 & 6.414\%& 86.482& 4.992\%& \textbf{90.741}& \textbf{0.314}\%\\
           & C=50 & 128.999 & 128.646 & 0.273\%& \textbf{128.946} & \textbf{0.042}\%& 128.906 & 0.072\%\\
           & C=100 & 182.395 & 181.615 & 0.424\%& \textbf{181.870} & \textbf{0.285}\%& 181.654 & 0.407\%\\ \hline
          \multirow{4}{*}{N=1000} & C=10 & 81.334 & 53.319 & 34.401\%& 58.072 & 28.565\%& \textbf{79.650}& \textbf{2.074}\%\\
           & C=25 & 128.993 & 122.112 & 5.340\% & 123.775& 4.046\% & \textbf{128.240} & \textbf{0.584}\%\\
           & C=50 & 182.813 & 170.223 & 6.877\%& 171.789 & 6.021\%& \textbf{181.985}& \textbf{0.451}\%\\
           & C=100 & 257.411 & 252.701 & 1.831\%& 253.361 & 1.575\%& \textbf{257.224} & \textbf{0.072}\%\\ \hline\hline
          \multicolumn{2}{c|}{All} & - & & 6.131\% & & 4.647\% & & \textbf{0.516}\%  
    \end{tabular}
    \caption{Average values and optimality gaps for KP on various instance distributions}
    \label{tab:KP}
\end{table*}

\section{Impact of k-nearest-neighbor heuristic on model performance}
\label{anx:knns}
As mentioned in the Sec.~\ref{sec:experiments}, inference time of our model can be reduced by using a $k$-nearest-neighbor heuristic to restrict the search space. For both greedy rollouts and beam search strategies, at every step, it is possible to reduce the remaining graph by considering only a certain number of neighboring nodes. Table \ref{tab:knns} presents the experiments on TSP and CVRP where we apply the model on different number of nearest neighbors of the origin. This approach clearly reduces the execution time, but also in some cases even improves the performance in terms of optimality gap.
Note that the criteria on which to select the nearest neighbors does not have to be the distance but the same metric as some greedy heuristic for the problem. For example for the Knapsack problem, the items could be restricted to the $k$ items with highest values (or highest ratios of value/weight).

\begin{table*}
    \centering
    \begin{tabular}{l l | r r  r r | r r  r r }
          & &  \multicolumn{4}{c|}{TSP} & \multicolumn{4}{c}{CVRP} \\
          & & \multicolumn{2}{c}{Greedy} & \multicolumn{2}{c|}{Beam size 16} & \multicolumn{2}{c} {Greedy} & \multicolumn{2}{c} {Beam size 16} \\ \hline
          N=500 & full graph & 1.091\% & 2m & 0.572\%& 28m & 3.951\%
& 2m & 1.503\% & 28m\\ 
           & 250 KNNs & 1.186\% & 1m & 0.550\% & 15m & 3.645\% & 1m & 1.040\% & 15m \\ \hline
          N=1000 & full graph & 2.141\% & 14m & 1.412\% & 3.5h & 6.282\% &
15m & 3.660\% & 5.4h \\
          & 500 KNNs & 2.086\% & 7m & 1.348\% & 2.7h & 6.330\% & 10m & 3.382\% & 2.7h \\ 
           & 250 KNNs & 2.294\% & 2m &1.379\% & 38m & 5.883\% & 2m & 2.552\% & 38m \\

    \end{tabular}
    \caption{Experimental results with different numbers $N$ of nearest-neighbors during the inference. }
    \label{tab:knns}
\end{table*}
\section{Ablation study}

\subsection{PerceiverIO architecture}
\label{anx:perceiver}

To construct a solution, our model needs to perform $N$ steps, and compute $N^2$ attention matrices at each step, so total complexity is $\mathcal{O}(N^3)$. Although our model outperforms current state-of-the-art models in terms of both performance and inference time, this may become a limiting factor when applying the model to large graph sizes. This is a well-known issue for all attention models, and there have been various proposals to reduce the complexity of attention. In this work, we propose a compromise between model complexity and quality of the solution by replacing the standard transformer model with the PerceiverIO architecture \cite{jaegle_perceiver_2022}. PerceiverIO computes cross-attention between input data and latent variables and then compute self-attention between the latent variables,  resulting in all computations being done in the latent space. This approach allows the number of operations to be linear (instead of quadratic) in the input's length. 

In our implementation, we use similar hyperparameters as for the transformer model: 9 attention layers with 8 heads, an embedding size of 192, and a feed-forward layer dimension of 512. For the latent space, we use a vector with dimensions of $64{\times}48$, while the output query array is the same as the input array.


\subsection{Approximated model}

As mentioned in Section ~\ref{sec:related-work}, existing works have also noted the importance of accounting for the change of the state after each action: \cite{xin_step-wise_2021, xin_multi-decoder_2021} claimed that models should recompute the embeddings after each action. 
However because of the additional training cost, they proposed the following approximation: fixing lower encoder levels and recomputing just the top level with a mask of already visited nodes. 
They hypothesis a kind of hierarchical feature extraction property that may make the last layers more important for the fine-grained next decision. 
In contrast, we call our entire model after each construction step; effectively recomputing the embeddings of each state. 
We hypothesis that this property may explain the superior performance (Table~1) w.r.t MDAM model \cite{xin_multi-decoder_2021}. 
In order to support this hypothesis, we have implemented an approximated version of our model as follows. We fixed the bottom layers of our model and recomputed just the top layer, by masking already visited nodes and adding the updated information (origin and destination tokens for TSP).  
As expected, inference time is 1.6 times shorter, but performance is severely degraded: we obtained optimality gap of 8.175\% (vs 0.35\% with original model) on TSP100.

\subsection{On the impact of expert trajectories}
\label{anx:expert-solutions}
Our datasets consist of pairs of a problem instance and a solution. 
For imitation learning, we need pairs of a problem instance and an expert trajectory in the MDP. However multiple trajectories may be obtained from the solution. For example, in the TSP, a solution is a loop in a graph, and one has to decide at which node its construction started and in which direction it proceeded. In the CVRP, the order in which the subtours are constructed needs also to be decided. Hence, all our datasets are pre-processed to transform solutions into corresponding construction trajectories (a choice for each or even all possible ones). Our experiments demonstrate that this transformation has a significant impact on performance. Specifically, in the CVRP, we found that the best performance is achieved by training the model on expert solutions that sort subtours by the remaining vehicle capacity at the end of each subtour. More precisely, the last subtour in the expert trajectory has the biggest remaining capacity (the subtour that visits remaining unvisited nodes), while the first subtour had the smallest remaining capacity (usually 0). This simple data preprocessing step leads to an almost twofold improvement in performance compared to training on expert trajectories with subtours in arbitrary order. Intuitively, these trajectories encourage the model to create subtours that use the whole vehicle capacity whenever possible.
\clearpage
\section{Proofs}
\subsection{Properties of the direct MDP}
\label{anx:proof-properties}
\begin{proof}
Let $\MX$ be a solution space, $(f,X){\in}\MF_\MX$ an instance and $\mathcal{M}_{(f,X)}$ its direct MDP. $\MX$ is normally designed to accommodate all the instances of a CO problem, possibly even of multiple problems, so we do not make the assumption that $\MX$ or $\MZ$ are finite, but only that the conditions of Def.~\ref{def:solution-space} are satisfied.

{\bf The empty partial solution $\epsilon$ is a valid state}\\
By definition, $(f,X){\in}\MF_\MX$ implies $X{\not=}\emptyset$, which is equivalent, by definition, to $\epsilon{\in}\bar{X}$, i.e. $\epsilon$ is in the state space of $\mathcal{M}_{(f,X)}$.

{\bf Each state has a finite, non null number of allowed actions}.\\
Let $x{\in}\bar{X}$ be a state of $\mathcal{M}_{(f,X)}$.

By definition $x{\circ}y{\in}X$ for some $y{\in}\MX$, and by \eqref{eqn:state-decomposition}, we have $y{=}z_1{\circ}\cdots{\circ}z_n$ for some $z_{1:n}{\in}\MZ$. If $n{>}0$, then, by associativity, $(x{\circ}z_1){\circ}z_2{\circ}\cdots{\circ}z_n{=}x{\circ}y{\in}X$, hence by definition $x{\circ}z_1{\in}\bar{X}$, hence step action $z_1$ is allowed from $x$. If $n{=}0$ then $y{=}\epsilon$ and $x{=}x{\circ}y{\in}X$ hence the null action $\epsilon$ is allowed from $x$. In both cases, $x$ is not a dead end state.

Since $X$ is finite and, by \eqref{eqn:state-decomposition}, each of its elements has a finite number of step decompositions, the set $Z$ of steps occurring in at least one step decomposition of an element of $X$ is itself finite, even if $\MZ$ is infinite. Now, assume $z$ is an allowed step action from $x$, hence, by definition $x{\circ}z{\in}\bar{X}$ hence $x{\circ}z{\circ}y{\in}X$ for some $y{\in}\MX$. By \eqref{eqn:state-decomposition}, both $x$ and $y$ have step decompositions, hence $z$ occurs in at least one step decomposition of an element of $X$, i.e. $z{\in}Z$. Hence, all step actions allowed from a valid state are in the finite set $Z$.

{\bf All but a finite number of transitions in a trajectory are null actions}\\
By definition and \eqref{eqn:state-decomposition}, an element of $\bar{X}$ is the composition of a prefix of a step decomposition of an element of $X$. Since $X$ is finite and, by \eqref{eqn:state-decomposition}, each of its elements has a finite number of step decompositions, $\bar{X}$ is finite.

Let $x_0 a_1 x_1 a_2 x_2\cdots$ be an infinite trajectory of $\mathcal{M}_{(f,X)}$. By definition of the allowed transitions, it is easy to show that $x_n{=}x_0{\circ}a_1{\circ}\cdots{\circ}a_n$ for all $n{\in}\mathbb{N}$ and furthermore $x_n{\in}\bar{X}$. Let $N{=}\{n{\in}\mathbb{N}|a_n{\not=}\epsilon\}$. Reason by contradiction and assume $N$ is infinite, i.e. there exists an increasing sequence $n_{1:\infty}$ such that $N{=}\{n_i\}_{i=1}^\infty$. For all $i{\geq}1$, we have $x_{n_i}{=}x_0{\circ}z_1{\circ}\cdots{\circ}z_i$ where $z_i{=}a_{n_i}{\in}\MZ$ and $x_{n_i}{\in}\bar{X}$. Since $\bar{X}$ is finite, there exist indices $j{>}i$ such that $x_{n_j}{=}x_{n_i}$. Hence $x_{n_i}{=}x_{n_i}{\circ}z_{i+1}{\circ}\cdots{\circ}z_j$. Hence, any step decomposition of $x_{n_i}$ can be expanded with the non empty sequence $z_{i+1:j}$ and remain a step decomposition of $x_{n_i}$. Hence there are infinitely many step decompositions of $x_{n_i}$. Contradiction. Therefore, $N$ is finite.
\end{proof}
\subsection{Soundness of the direct MDP}
\label{anx:proof-soundness}
\begin{proof}
We first show the following general lemma. Let $Y{\stackrel{\psi}{\rightarrow}}X{\stackrel{f}{\rightarrow}}\mathbb{R}{\cup}\{\infty\}$ be arbitrary mappings. If $\psi$ is surjective then
\begin{equation}
\label{eqn:soundness}
\arg\min_{x\in X}f(x)=\psi(\arg\min_{y\in Y}f(\psi(y)))
\end{equation}
This is shown by simple application of the definition of $\arg\min$ (as a set). The subscript $_*$ denotes the steps where the assumption that $\psi$ is a surjection is used:
\[
\begin{array}{l}
x'\in\psi(\arg{\min}_yf(\psi(y)))
\;\;\;\textrm{iff}\;\;\;
\exists y'\in\arg{\min}_yf(\psi(y))\; x'=\psi(y')
\\ \textrm{iff}\;\;\;
\exists y'\;x'=\psi(y')\;\forall y\;f(\psi(y'))\leq f(\psi(y))
\;\;\;\textrm{iff}\;\;\;
\exists y'\;x'=\psi(y')\;\forall y\;f(x')\leq f(\psi(y))
\\ \textrm{iff$_*$}\;\;\;
\forall y\;f(x') \leq f(\psi(y))
\;\;\;\textrm{iff$_*$}\;\;\;
\forall x\;f(x')\leq f(x)
\;\;\;\textrm{iff}\;\;\;
x'\in\arg{\min}_xf(x)
\end{array}
\]
Now, let $\MX$ be a solution space, $(f,X){\in}\MF_\MX$ an instance and $\mathcal{M}_{(f,X)}$ its direct MDP. Let $Y$ be the set of trajectories in $\mathcal{M}_{(f,X)}$ starting at $\epsilon$, and for each $y{\in}Y$, let $\psi(y)$ denote its outcome.

Observe that $\psi{:}Y{\mapsto}X$. Indeed, it has been shown above that a valid trajectory $y$ always ends with an infinite (stationary) sequence of null transitions on a state $x$ which is also its outcome when $y$ starts with $\epsilon$, i.e. $x{=}\psi(y)$. For the null transitions to be allowed, we must have $x{\in}X$.
\item 
Let's show that $\psi$ is surjective. Let $x{\in}X$. By \eqref{eqn:state-decomposition}, $x{=}z_1{\circ}\cdots{\circ}z_n$ for some $z_{1:n}{\in}\MZ$. For each $m{\in}\{0{:}n\}$, let $x_m{=}z_1{\circ}\cdots{\circ}z_m$ and $x'_m{=}z_{m+1}{\circ}\cdots{\circ}z_n$. Hence $x_m{\circ}x'_m{=}x{\in}X$ hence $x_m{\in}\bar{X}$ is a valid state. Hence, the sequence $y{=}x_0z_1x_1\cdots z_nx_n(\epsilon x_n)^*$ is a valid trajectory of $\mathcal{M}_{(f,X)}$, and it is starting at $x_0{=}\epsilon$, hence $y{\in}Y$. Its stationary state is $x_n{=}x$ hence $\psi(y){=}x$. Hence $\psi$ is surjective.

Now, let $y{=}x_0a_1x_1\cdots a_nx_n\cdots$ be a trajectory in $Y$. By definition, $x_0{=}\epsilon$. Furthermore, by definition of the direct MDP transitions, the reward for $a_n$ (whether it is a step or null) is $f(x_{n-1}){-}f(x_{n})$, which is null when the trajectory becomes stationary on state $\psi(y)$. By summation, the total reward $R(y)$ of trajectory $y$ is $f(\epsilon){-}f(\psi(y))$. The objective is defined up to an additive constant, so we can assume without loss of generality that $f(\epsilon){=}0$. Hence $f(\psi(y)){=}{-}R(y)$.

Finally, since $\psi$ is surjective, we can apply \eqref{eqn:soundness} proved above and get
\[
\arg{\min}_{x\in X}f(x)
= \psi(\arg{\min}_{y\in Y}f(\psi(y)))
=\psi(\arg{\min}_{y\in Y}{-}R(y))
=\psi(\arg{\max}_{y\in Y}R(y))
\]
In other words, an optimal solution to $(f,X)$ is the outcome of an optimal trajectory of $\mathcal{M}_{(f,X)}$ starting at $\epsilon$, i.e. one which can be obtained by application of an optimal policy.
\end{proof}
\subsection{Bisimulation between the direct MDP and the BQ-MDP}
\label{anx:proof-bisimulation}
Let $\MX$ be a solution space, $(f,X){\in}\MF_\MX$ be an instance, $x{\in}\bar{X}$ be a valid state for the direct MDP of $(f,X)$. We have to show that $\boldsymbol{\Phi}_{(f,X)}$ is a bisimulation $\mathcal{M}_{(f,X)}{\leftrightarrow}\mathcal{M}$, i.e. the commutation of the diagram (see Sec.~\ref{anx:background-bisimulation} for background):
\begin{center}
\begin{tikzpicture}
\node (p) at (0,0) {$x$};\node (q) at (0,-1.5) {$(f*x,X*x)$};
\node (pp) at (8,0) {$x\circ z$};\node (qq) at (8,-1.5) {$T$};
\draw[->] (p) -- node [midway,above] {$z$} node[midway,below] {$f(x)-f(x\circ z)$} (pp);
\draw[->] (q) -- node [midway,above] {$z$} node [midway,below] {$(f*x)(\epsilon)-(f*x)(z)$} (qq);
\draw[->] (p) -- (q) node [midway,left] {$\boldsymbol{\Phi}_{(f,X)}$};
\draw[->] (pp) -- (qq) node [midway,right] {$\boldsymbol{\Phi}_{(f,X)}$};
\end{tikzpicture}
\end{center}
i.e. the values of $T$ (SE corner) obtained from $x$ (NW corner) via the two paths (NW-SW-SE and NW-NE-SE) are the same.
\begin{proof}
The value of $T$ via NW-SW-SE is given by $((f{*}x){*}z,(X{*}x){*}z)$, while via NW-NE-SE it is $(f{*}(x{\circ}z),X{*}(x{\circ}z))$. For any $y{\in}\MX$ we have, by associativity
\[
(f*(x\circ z))(y) = f((x\circ z)\circ y) = f(x\circ(z\circ y)) = (f*x)(z\circ y) = ((f*x)*z)(y)
\]
Hence $f{*}(x{\circ}z){=}(f{*}x){*}z$. The proof that $X{*}(x{\circ}z){=}(X{*}x){*}z$ is identical. Hence $T$ has the same result via the two paths. 

There are two more things to verify. First that the reward of the N and S transitions are the same: this is obvious since by definition $(f{*}x)(\epsilon){=}f(x{\circ}\epsilon){=}f(x)$ and $(f{*}x)(z){=}f(x{\circ}z)$. Second, that the conditions for action $z$ to be allowed in the N and S transitions are the same: this is also obvious since the condition in the N transition is $x{\circ}z{\in}\bar{X}$ while for the S transition, it is $(X{*}x){*}z{\not=}\emptyset$ or equivalently $X{*}(x{\circ}z){\not=}\emptyset$, and we have for any $y{\in}\MX$ (a fortiori for $y{=}x{\circ}z$)
\[
X*y\not=\emptyset
\textrm{ iff }
\exists y'\;y\circ y'\in X
\textrm{ iff }
y\in\bar{X}
\]

The other commutation, for the null action, is obvious:
\begin{center}
\begin{tikzpicture}
\node (p) at (0,0) {$x$};\node (q) at (0,-1.5) {$(f*x,X*x)$};
\node (pp) at (3,0) {$x$};\node (qq) at (3,-1.5) {$(f*x,X*x)$};
\draw[->] (p) -- node [midway,above] {$\epsilon$} node[midway,below] {$0$} (pp);
\draw[->] (q) -- node [midway,above] {$\epsilon$} node [midway,below] {$0$} (qq);
\draw[->] (p) -- (q) node [midway,left] {$\boldsymbol{\Phi}_{(f,X)}$};
\draw[->] (pp) -- (qq) node [midway,right] {$\boldsymbol{\Phi}_{(f,X)}$};
\end{tikzpicture}
\end{center}
since the condition for $\epsilon$ to be allowed in the N transition is $x{\in}X$ and in the S transition is $\epsilon{\in}X{*}x$, and the two conditions are equivalent by definition.
\end{proof}
Observe that the core of the proof is that $\MF_\MX$ with operation $*$ is a monoid action on the right for monoid $\MX$. Hence, it defines a weaker, discrete generalisation of the mathematical notion of flow (a group action for the group of real numbers). Using Prop.~\ref{prop:bisimulation-disjoint-union}, the above result can be strictly equivalently reformulated as:

Let $\boldsymbol{\Phi}^\sqcup{=}\bigsqcup_{(f,X)\in\MF_\MX}\boldsymbol{\Phi}_{(f,X)}$ and $\mathcal{M}^\sqcup{=}\bigsqcup_{(f,X)\in\MF_\MX}\mathcal{M}_{(f,X)}$ be the disjoint union of the bisimulation mappings $\boldsymbol{\Phi}_{(f,X)}$ (on their domain side) and that of their corresponding direct MDPs $\mathcal{M}_{(f,X)}$, respectively. Then $\boldsymbol{\Phi}^\sqcup$ is a bisimulation $\mathcal{M}^\sqcup{\leftrightarrow}\mathcal{M}$ where $\mathcal{M}$ is the reduced MDP of $\MX$. 

Now, $\boldsymbol{\Phi}^\sqcup$ is obviously surjective, since $\boldsymbol{\Phi}^\sqcup(((f,X),\epsilon)){=}(f,X)$ for any $(f,X){\in}\MF_\MX$. Hence, by Prop.~\ref{prop:bisimulation-quotient}, $\mathcal{M}$ is {\em isomorphic} to the quotient MDP $\mathcal{M}^\sqcup/\boldsymbol{\Phi}^\sqcup$.
\section{Mathematical background}
\subsection{Monoids}
\label{anx:background-monoids}
Monoids are one of the simplest algebraic structure.
\begin{definition}[Monoid]
A monoid is a triple $(M,\circ,\epsilon)$ where $\circ$ is a binary operation on the set $M$ and $\epsilon$ is a distinguished element of $M$, such that
\begin{align*}
\forall x,y,z\in M\hspace{1cm} &
x\circ (y\circ z) = (x\circ y) \circ z
\hspace{1cm}\textrm{(associativity)}\\
\forall x \in M\hspace{1cm} &
x\circ\epsilon = \epsilon\circ x = x
\hspace{1cm}\textrm{(neutral element)}
\end{align*}
\end{definition}
For example the (finite) sequences of elements of an arbitrary set, equipped with concatenation and the empty sequence, forms a monoid. Now, if $M$ is a monoid and $x_{1:n}$ is a sequence of elements of $M$, then the expression $x_1{\circ}\cdots{\circ}x_n$ denotes an element of $M$, independent of the way it is parenthesised. This extends to the case $n{=}0$ where the expression denotes $\epsilon$. And the mapping $x_{1:n}{\mapsto}x_1{\circ}\cdots{\circ}x_n$ from the monoid of sequences of elements of $M$ into $M$ is a monoid homomorphism.

A sub-monoid of $M$ is a subset of $M$ which contains $\epsilon$ and is closed under operation $\circ$. Obviously, a sub-monoid of $M$ is itself a monoid. If $S$ is a sub-monoid of $M$, a generator of $S$, if it exists, is a subset $S_o$ of $M{\setminus}\{\epsilon\}$ such that $S$ is the smallest sub-monoid of $M$ containing $S_o$. It is easy to show that in that case, $S$ is exactly the set of elements of the form $x_1{\circ}\cdots{\circ}x_n$ where $x_{1:n}{\in}S_o^n$.
\subsection{Bisimulation}
\label{anx:background-bisimulation}
Bisimulation is a very broad concept which applies to arbitrary Labelled Transition Systems (LTS). It has been declined in various flavours of LTS, such as Process Calculi, Finite State Automata, Game theory, and of course MDPs (initially deterministic MDPs such as those used here, later extended to stochastic MDPs which we are not concerned with here). We use the following notation to indicate that the transition from state $p$ to state $p'$ with label $\ell$ is valid in LTS $\mathcal{L}$ ($\mathcal{L}$ is omitted when unambiguous).
\[p\xrightarrow[(\mathcal{L})]{\;\;\;\ell\;\;\;}p'\]

Recall that the disjoint union of a family $(S_i)_{i\in I}$ of sets is the set $\bigsqcup_{i\in I}S_i\bydef{=}\bigcup_{i\in I}\{i\}{\times}S_i$. We can define the disjoint union of a family of LTSs as follows:
\begin{definition}[Disjoint union]
If $(\mathcal{L}_i)_{i\in I}$ is a family of LTSs {\em sharing the same label space}, each with state space $S_i$, then the {\em disjoint union} $\bigsqcup_{i\in I}\mathcal{L}_i$ is the LTS $\mathcal{L}$ with state space $\bigsqcup_{i\in I}S_i$ and transitions
\[
(i,p)\xrightarrow[(\mathcal{L})]{\ell}(i,p')
\hspace{1cm}\textrm{if}\hspace{1cm}
p\xrightarrow[(\mathcal{L}_i)]{\ell}p'
\]
\end{definition}
\begin{definition}[Simulation, Bisimulation]
Let $\mathcal{L}_1,\mathcal{L}_2$ be LTSs {\em sharing the same label space} and $\mathcal{R}$ a binary relation relating the states of $\mathcal{L}_1$ and those of $\mathcal{L}_2$. $\mathcal{R}$ is a {\em simulation} $\mathcal{L}_1{\rightarrow}\mathcal{L}_2$ if
\[
\forall \ell,p,q,p' \textrm{ s.t. } p\mathcal{R}q,\;p\xrightarrow[(\mathcal{L}_1)]{\ell}p'\hspace{.5cm}
\exists q' \textrm{ s.t. } p'\mathcal{R}q',\;q\xrightarrow[(\mathcal{L}_2)]{\ell}q'
\]
$\mathcal{R}$ is a {\em bisimulation} $\mathcal{L}_1{\leftrightarrow}\mathcal{L}_2$ if $\mathcal{R}$ is a simulation $\mathcal{L}_1{\rightarrow}\mathcal{L}_2$ and $\mathcal{R}^{\textrm{op}}$ is a simulation $\mathcal{L}_2{\rightarrow}\mathcal{L}_1$.
\end{definition}
Informally, a simulation (resp. bisimulation) is characterised by a commutation property in the following diagram: if the pair of arrows connected to $p$ (resp. to either $p$ or $q$) is valid then so is the ``opposite'' pair w.r.t. the centre of the diagram.
\begin{center}
\begin{tikzpicture}
\node (p) at (0,0) {$p$};\node (q) at (0,-1) {$q$};
\node (pp) at (2,0) {$p'$};\node (qq) at (2,-1) {$q'$};
\draw[->] (p) -- (pp) node [midway,above] {$\ell$};
\draw[->] (q) -- (qq) node [midway,above] {$\ell$};
\draw[->] (p) -- (q) node [midway,left] {$\mathcal{R}$};
\draw[->] (pp) -- (qq) node [midway,right] {$\mathcal{R}$};
\end{tikzpicture}
\end{center}
A {\em homogeneous} simulation (resp. bisimulation) on an LTS $\mathcal{L}$ is a simulation $\mathcal{L}{\rightarrow}\mathcal{L}$ (resp. bisimulation $\mathcal{L}{\leftrightarrow}\mathcal{L}$). Note that $\mathcal{R}$ is a simulation $\mathcal{L}_1{\rightarrow}\mathcal{L}_2$ (resp. bisimulation $\mathcal{L}_1{\leftrightarrow}\mathcal{L}_2$) if and only if $\{((1,p),(2,q))|(p,q){\in}\mathcal{R}\}$ is a homogeneous simulation (resp. bisimulation) on $\mathcal{L}_1{\sqcup}\mathcal{L}_2$.
\begin{proposition}
\label{prop:bisimulation-disjoint-union}
Let $\mathcal{L},(\mathcal{L}_i)_{i\in I}$ be LTSs {\em sharing the same label space}. For each $i{\in}I$, let $\mathcal{R}_i$ be a bi-partite relation from the state space of $\mathcal{L}_i$ into that of $\mathcal{L}$, and let $\mathcal{R}^\sqcup{=}\{((i,p),q)|(p,q){\in}\mathcal{R}_i\}$ be their disjoint union on the domain side. Then $\mathcal{R}^\sqcup$ is a bisimulation $(\bigsqcup_{i\in I}\mathcal{L}_i){\leftrightarrow}\mathcal{L}$ if and only if $\mathcal{R}_i$ is a bisimulation $\mathcal{L}_i{\leftrightarrow}\mathcal{L}$ for each $i{\in}I$.
\end{proposition}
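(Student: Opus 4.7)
The plan is to unfold the definitions of disjoint union of LTSs and of bisimulation, and observe that the conditions split neatly along components of $I$, because transitions in $\bigsqcup_{i\in I}\mathcal{L}_i$ are, by construction, ``trapped'' inside a single component.

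First I would establish the key structural fact about the disjoint union: a transition $(i,p)\xrightarrow{\ell}(j,p')$ exists in $\bigsqcup_{i\in I}\mathcal{L}_i$ if and only if $i=j$ and $p\xrightarrow{\ell}_{\mathcal{L}_i}p'$. This follows directly from the definition of disjoint union of LTSs and implies that any pair of composable pieces (a pair in $\mathcal{R}^\sqcup$ together with a transition from its first element) is witnessed entirely within one component $i$.

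Next I would do the forward implication. Suppose each $\mathcal{R}_i$ is a bisimulation $\mathcal{L}_i\leftrightarrow\mathcal{L}$. To check simulation $(\bigsqcup_{i\in I}\mathcal{L}_i)\to\mathcal{L}$, take $((i,p),q)\in\mathcal{R}^\sqcup$ and a transition $(i,p)\xrightarrow{\ell}(i,p')$ in the union. By the structural fact, $p\xrightarrow{\ell}_{\mathcal{L}_i}p'$, and since $(p,q)\in\mathcal{R}_i$, the simulation property of $\mathcal{R}_i$ yields $q'$ with $q\xrightarrow{\ell}_{\mathcal{L}}q'$ and $(p',q')\in\mathcal{R}_i$, hence $((i,p'),q')\in\mathcal{R}^\sqcup$. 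The reverse simulation property (for $(\mathcal{R}^\sqcup)^{\mathrm{op}}$) is checked symmetrically from the bisimulation of each $\mathcal{R}_i$: starting from $q\xrightarrow{\ell}_{\mathcal{L}}q'$ we pull back a $p'$ in $\mathcal{L}_i$, then lift it to $(i,p')$ in the union.

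For the converse, suppose $\mathcal{R}^\sqcup$ is a bisimulation. Fix $i\in I$ and take $(p,q)\in\mathcal{R}_i$, so $((i,p),q)\in\mathcal{R}^\sqcup$. Any transition $p\xrightarrow{\ell}_{\mathcal{L}_i}p'$ lifts to $(i,p)\xrightarrow{\ell}(i,p')$ in the union; applying the simulation property of $\mathcal{R}^\sqcup$ yields $q'$ with $q\xrightarrow{\ell}_{\mathcal{L}}q'$ and $((i,p'),q')\in\mathcal{R}^\sqcup$. By the definition of $\mathcal{R}^\sqcup$ this forces $(p',q')\in\mathcal{R}_i$, which is exactly the simulation condition for $\mathcal{R}_i$. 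Conversely, any transition in $\mathcal{L}$ from $q$ is matched by $\mathcal{R}^\sqcup$ with a transition from $(i,p)$, which by the structural fact must stay in component $i$ and hence be a transition in $\mathcal{L}_i$ from $p$, giving the reverse simulation for $\mathcal{R}_i$.

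This is a straightforward bookkeeping argument; the only place where one must be careful is the structural fact that transitions in $\bigsqcup_{i\in I}\mathcal{L}_i$ cannot cross components, which is what keeps the ``correct'' $i$ fixed through each verification and makes the two directions of the equivalence dual to each other.
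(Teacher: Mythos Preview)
Your proposal is correct and follows essentially the same approach as the paper: the paper's proof is a terse outline observing that the bisimulation diagram for $\mathcal{R}^\sqcup$ between $\bigsqcup_i\mathcal{L}_i$ and $\mathcal{L}$ is, by definition of the disjoint union, equivalent to the bisimulation diagram for $\mathcal{R}_i$ between $\mathcal{L}_i$ and $\mathcal{L}$, so the commutation property holds for one iff it holds for the other. Your write-up simply makes this explicit by isolating the structural fact that transitions in the disjoint union cannot cross components and then verifying the four simulation conditions; the content is the same.
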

\begin{proof}
(outline) This is essentially shown by observing that the following two diagrams, where $\mathcal{L}^\sqcup{=}\bigsqcup_{i\in I}\mathcal{L}_i$, are, by definition, equivalent:
\begin{center}
\begin{tikzpicture}
\node (p) at (0,0) {$(i,p)$};\node (q) at (0,-1) {$q$};
\node (pp) at (2,0) {$(i,p')$};\node (qq) at (2,-1) {$q'$};
\draw[->] (p) -- (pp) node [midway,below] {\scriptsize $(\mathcal{L}^\sqcup)$} node [midway,above] {$\ell$};
\draw[->] (q) -- (qq) node [midway,below] {\scriptsize $(\mathcal{L})$} node [midway,above] {$\ell$};
\draw[->] (p) -- (q) node [midway,left] {$\mathcal{R}^\sqcup$};
\draw[->] (pp) -- (qq) node [midway,right] {$\mathcal{R}^\sqcup$};
\begin{scope}[xshift=6cm]
\node (p) at (0,0) {$p$};\node (q) at (0,-1) {$q$};
\node (pp) at (2,0) {$p'$};\node (qq) at (2,-1) {$q'$};
\draw[->] (p) -- (pp) node [midway,below] {\scriptsize $(\mathcal{L}_i)$} node [midway,above] {$\ell$};
\draw[->] (q) -- (qq) node [midway,below] {\scriptsize $(\mathcal{L})$} node [midway,above] {$\ell$};
\draw[->] (p) -- (q) node [midway,left] {$\mathcal{R}_i$};
\draw[->] (pp) -- (qq) node [midway,right] {$\mathcal{R}_i$};
\end{scope}
\end{tikzpicture}
\end{center}
If the commutation property holds for one it holds for the other.
\end{proof}
\begin{proposition}
\label{prop:bisimulation-stability}
The identity on the state space of $\mathcal{L}$ is a bisimulation $\mathcal{L}{\leftrightarrow}\mathcal{L}$. The composition of a bisimulation $\mathcal{L}_1{\leftrightarrow}\mathcal{L}_2$ and a bisimulation $\mathcal{L}_2{\leftrightarrow}\mathcal{L}_3$ is a bisimulation $\mathcal{L}_1{\leftrightarrow}\mathcal{L}_3$. The union of a family of bisimulations $\mathcal{L}_1{\leftrightarrow}\mathcal{L}_2$ is a bisimulation $\mathcal{L}_1{\leftrightarrow}\mathcal{L}_2$. The inverse of a bisimulation $\mathcal{L}_1{\leftrightarrow}\mathcal{L}_2$ is a bisimulation $\mathcal{L}_2{\leftrightarrow}\mathcal{L}_1$.
\end{proposition}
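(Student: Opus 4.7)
The plan is to verify each of the four claims by unfolding the definition of bisimulation from the excerpt and carrying out straightforward diagram chases. Since a bisimulation $\mathcal{L}_1 \leftrightarrow \mathcal{L}_2$ is by definition a relation $\mathcal{R}$ such that $\mathcal{R}$ is a simulation $\mathcal{L}_1 \to \mathcal{L}_2$ and $\mathcal{R}^{\textrm{op}}$ is a simulation $\mathcal{L}_2 \to \mathcal{L}_1$, each of the four properties reduces to checking the simulation condition in both directions.

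For the identity relation on $\mathcal{L}$, I would note that $\textrm{Id}^{\textrm{op}} = \textrm{Id}$, and any transition $p \xrightarrow{\ell} p'$ is witnessed by itself under $p\,\textrm{Id}\,p$; nothing more is needed. For the composition of a bisimulation $\mathcal{R}_1 : \mathcal{L}_1 \leftrightarrow \mathcal{L}_2$ with a bisimulation $\mathcal{R}_2 : \mathcal{L}_2 \leftrightarrow \mathcal{L}_3$, I would chase twice: starting from a pair $(p, r) \in \mathcal{R}_2 \circ \mathcal{R}_1$ with intermediate $q$ such that $p\,\mathcal{R}_1\,q$ and $q\,\mathcal{R}_2\,r$, a transition $p \xrightarrow{\ell} p'$ in $\mathcal{L}_1$ yields some $q'$ via the simulation property of $\mathcal{R}_1$, which in turn yields $r'$ via the simulation property of $\mathcal{R}_2$, giving the required witness $(p', r') \in \mathcal{R}_2 \circ \mathcal{R}_1$. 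The reverse direction uses the elementary identity $(\mathcal{R}_2 \circ \mathcal{R}_1)^{\textrm{op}} = \mathcal{R}_1^{\textrm{op}} \circ \mathcal{R}_2^{\textrm{op}}$ and the corresponding simulation properties applied in reverse.

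For unions, if $(p, q) \in \bigcup_i \mathcal{R}_i$ then $(p, q) \in \mathcal{R}_{i_0}$ for some $i_0$, and any transition from $p$ produces, via the simulation property of $\mathcal{R}_{i_0}$, a transition from $q$ to some $q'$ with $(p', q') \in \mathcal{R}_{i_0} \subseteq \bigcup_i \mathcal{R}_i$; the opposite direction follows symmetrically since $\bigl(\bigcup_i \mathcal{R}_i\bigr)^{\textrm{op}} = \bigcup_i \mathcal{R}_i^{\textrm{op}}$. For inverses, the statement is essentially a tautology: the definition of bisimulation is symmetric in $\mathcal{R}$ and $\mathcal{R}^{\textrm{op}}$, so exchanging their roles immediately yields that $\mathcal{R}^{\textrm{op}}$ is a bisimulation $\mathcal{L}_2 \leftrightarrow \mathcal{L}_1$. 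I do not anticipate any genuine obstacle: the whole proof is routine diagram chasing, and the only bit of care needed is to keep track of which direction each simulation property is being invoked in during the composition argument.
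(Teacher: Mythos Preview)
Your proposal is correct and follows essentially the same approach as the paper: the paper's proof is an outline that details only the composition case via the same two-cell diagram chase you describe (through an intermediate state in $\mathcal{L}_2$), and dismisses the identity, union, and inverse cases as ``similarly obvious.'' If anything, your write-up is more explicit than the paper's, but the underlying argument is identical.
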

Hence, LTSs with bisimulation form a category in Category theory.
\begin{proof}
(outline) Let's detail stability by composition, the other cases are similarly obvious. If $\mathcal{R}_1,\mathcal{R}_2$ are the two bisimulations being composed, apply the commutation property to each cell of the following diagram (from top to bottom and vice versa).
\begin{center}
\begin{tikzpicture}
\node (p) at (0,0) {$p$};\node (r) at (0,-1) {$r$};\node (q) at (0,-2) {$q$};
\node (pp) at (2,0) {$p'$};\node (rr) at (2,-1) {$r'$};\node (qq) at (2,-2) {$q'$};
\draw[->] (p) -- (pp) node [midway,above] {$\ell$};
\draw[->] (q) -- (qq) node [midway,above] {$\ell$};
\draw[->] (r) -- (rr) node [midway,above] {$\ell$};
\draw[->] (p) -- (r) node [midway,left] {$\mathcal{R}_1$};
\draw[->] (r) -- (q) node [midway,left] {$\mathcal{R}_2$};
\draw[->] (pp) -- (rr) node [midway,right] {$\mathcal{R}_1$};
\draw[->] (rr) -- (qq) node [midway,right] {$\mathcal{R}_2$};
\end{tikzpicture}
\end{center}
\end{proof}
As a corollary, observe that the set of homogeneous bisimulations on an LTS $\mathcal{L}$ is stable by reflexive-symmetric-transitive closure. In particular, the union of all bisimulations, called the {\em bisimilarity} of $\mathcal{L}$, is itself a bisimulation, and it is an equivalence relation.
\begin{definition}[Quotienting]
Given an LTS $\mathcal{L}$ and an equivalence relation $\mathcal{R}$ on its state space, the quotient LTS $\mathcal{L}/\mathcal{R}$ is defined as follows: the label space is the same as that of $\mathcal{L}$; the states are the $\mathcal{R}$-equivalence classes; and the transitions are defined, for any classes $\dot{p},\dot{p}'$, by
\[
\dot{p}\xrightarrow[\;\;(\mathcal{L}/\mathcal{R})\;\;]{\ell}\dot{p}'
\hspace{.5cm}\textrm{if}\hspace{.5cm}
\forall p\in\dot{p}\;\exists p'\in\dot{p}'\;\;\;p\xrightarrow[\;\;(\mathcal{L})\;\;]{\ell}p'
\]
By extension, if $F$ is a mapping from the state space of $\mathcal{L}$ to an arbitrary set, then $\mathcal{L}/F$ denotes $\mathcal{L}/(F^{\textrm{op}}{\circ}F)$.
\end{definition}
\begin{proposition}
\label{prop:bisimulation-quotient-base}
Let $\mathcal{R}$ be an equivalence on the state space of $\mathcal{L}$. $\mathcal{R}$ is a (homogeneous) bisimulation on $\mathcal{L}$ if and only if $\in$ is a bisimulation $\mathcal{L}{\leftrightarrow}\mathcal{L}/\mathcal{R}$.
\end{proposition}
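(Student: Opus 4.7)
The plan is to unfold both directions of the iff via the defining commuting-square of bisimulation, exploiting the fact that since $\mathcal{R}$ is an equivalence, $\mathcal{R}{=}\mathcal{R}^{\mathrm{op}}$, so its two simulation conditions collapse into one. Write $\dot{p}$ for an equivalence class and recall that $p{\in}\dot{p}$ exactly when $p{\mathcal{R}}q$ for every $q{\in}\dot{p}$. Being a bisimulation $\mathcal{L}{\leftrightarrow}\mathcal{L}/\mathcal{R}$ for $\in$ decomposes, by definition, into: (i) if $p{\in}\dot{p}$ and $p{\xrightarrow{\ell}}p'$ in $\mathcal{L}$, then some $\dot{p}'$ with $p'{\in}\dot{p}'$ satisfies $\dot{p}{\xrightarrow{\ell}}\dot{p}'$ in $\mathcal{L}/\mathcal{R}$; and (ii) if $p{\in}\dot{p}$ and $\dot{p}{\xrightarrow{\ell}}\dot{p}'$ in $\mathcal{L}/\mathcal{R}$, then some $p'{\in}\dot{p}'$ satisfies $p{\xrightarrow{\ell}}p'$ in $\mathcal{L}$.

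For the forward direction ($\Rightarrow$), assume $\mathcal{R}$ is a homogeneous bisimulation on $\mathcal{L}$. For (i), take $\dot{p}'{=}[p']_\mathcal{R}$; checking the quotient transition then amounts to showing that every $q{\in}\dot{p}$ admits a matching $q'{\in}\dot{p}'$ with $q{\xrightarrow{\ell}}q'$, which is precisely the commuting square given by the bisimulation hypothesis applied to $p{\mathcal{R}}q$ and $p{\xrightarrow{\ell}}p'$. For (ii), the definition of the quotient transition directly provides, for the specific representative $p{\in}\dot{p}$, some $p'{\in}\dot{p}'$ with $p{\xrightarrow{\ell}}p'$, so nothing more is needed.

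For the reverse direction ($\Leftarrow$), assume $\in$ is a bisimulation. Given $p{\mathcal{R}}q$ and $p{\xrightarrow{\ell}}p'$, set $\dot{p}{=}[p]_\mathcal{R}$, so both $p,q{\in}\dot{p}$. Applying the first half of the bisimulation of $\in$ to the transition from $p$ yields some class $\dot{p}'$ with $p'{\in}\dot{p}'$ and $\dot{p}{\xrightarrow{\ell}}\dot{p}'$. Applying the second half to the representative $q{\in}\dot{p}$ and this same quotient transition yields some $q'{\in}\dot{p}'$ with $q{\xrightarrow{\ell}}q'$. Since $p'$ and $q'$ share the class $\dot{p}'$, we conclude $p'{\mathcal{R}}q'$, which is exactly the simulation property needed for $\mathcal{R}$ (and by symmetry for $\mathcal{R}^{\mathrm{op}}$).

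I expect no real obstacle: the only bit that needs care is the universal quantifier ``$\forall p{\in}\dot{p}$'' in the definition of quotient transitions, which makes (ii) above trivial but forces us to invoke the bisimulation hypothesis on every class representative in (i); symmetrically, in the converse direction, this same universal quantifier is what lets a single quotient transition feed matching concrete transitions out of both $p$ and $q$, closing the argument.
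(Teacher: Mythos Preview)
Your proposal is correct and follows essentially the same route as the paper's proof: both directions are unfolded via the same commuting-square checks, with (i) in the forward direction requiring the bisimulation hypothesis on every class representative to verify the universal quantifier in the quotient transition, and (ii) being immediate from that same definition; the converse likewise chains (i) then (ii) exactly as the paper does. The only cosmetic difference is that where you invoke ``the second half'' of the $\in$-bisimulation in the converse, the paper simply says ``by definition'' --- but since your condition (ii) always holds by definition of the quotient transition, these are the same step.
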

\begin{proof}
We show both implications: first, assume $\mathcal{R}$ is a bisimulation on $\mathcal{L}$.

{\bf 1--} Let $p\in\dot{q}$ and $p\xrightarrow{\ell}p'$. Let $q\in\dot{q}$. Hence $p\mathcal{R}q$ and $p\xrightarrow{\ell}p'$. Since $\mathcal{R}$ is a bisimulation, there exists $q'$ such that $q\xrightarrow{\ell}q'$ and $p'\mathcal{R}q'$. Hence for all $q\in\dot{q}$, there exists $q'\in\bar{p}'$ such that $q\xrightarrow{\ell}q'$. Hence by definition $\dot{q}\xrightarrow{\ell}\bar{p}'$ while $p'\in\bar{p}'$.

{\bf 2--} Let $p\in\dot{q}$ and $\dot{q}\xrightarrow{\ell}\dot{q}'$. Hence by definition, there exists $p'\in\dot{q}'$ such that $p\xrightarrow{\ell}p'$.

Conversely, assume $\in$ is a bisimulation $\mathcal{L}{\leftrightarrow}\mathcal{L}/\mathcal{R}$.

{\bf 1--} Let $p\mathcal{R}q$ and $p\xrightarrow{\ell}p'$. Hence $p\in\bar{q}$ and $p\xrightarrow{\ell}p'$. Since $\in$ is a bisimulation, there exists $\dot{q}'$ such that $p'\in\dot{q}'$ and $\bar{q}\xrightarrow{\ell}\dot{q}'$. Now $q\in\bar{q}$, hence, by definition, there exists $q'\in\dot{q}'$ such that $q\xrightarrow{\ell}q'$. And $p'\mathcal{R}q'$ since $p',q'\in\dot{q}'$.

{\bf 2--} Let $p\mathcal{R}q$ and $q\xrightarrow{\ell}q'$. Hence $q\mathcal{R}p$ and $q\xrightarrow{\ell}q'$, and we are in the previous case up to a permutation of variables.
\end{proof}
\begin{proposition}[Bisimulation Quotienting]
\label{prop:bisimulation-quotient}
Let $F$ be a mapping from the state space of $\mathcal{L}_1$ to that of $\mathcal{L}_2$. If $F$ is a surjective bisimulation $\mathcal{L}_1{\leftrightarrow}\mathcal{L}_2$, then there exists a {\em bijective} (a.k.a. {\em isomorphic}) bisimulation $(\mathcal{L}_1/F){\leftrightarrow}\mathcal{L}_2$.
\end{proposition}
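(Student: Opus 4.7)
\medskip

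\noindent\textbf{Proof plan.} The plan is to construct the claimed bijective bisimulation as the map $\tilde{F}$ naturally induced by $F$ on equivalence classes, and to obtain the bisimulation property by composing the bisimulations we already have in hand via Prop.~\ref{prop:bisimulation-stability} and Prop.~\ref{prop:bisimulation-quotient-base}. Throughout, I identify the mapping $F$ with its graph $R_F{=}\{(p,F(p))\}$ when speaking of it as a relation; the hypothesis ``$F$ is a bisimulation'' should be read as ``$R_F$ is a bisimulation $\mathcal{L}_1{\leftrightarrow}\mathcal{L}_2$''.

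\medskip

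\noindent\textbf{Step 1: the kernel equivalence is a bisimulation.} By Prop.~\ref{prop:bisimulation-stability}, $R_F^{\textrm{op}}$ is a bisimulation $\mathcal{L}_2{\leftrightarrow}\mathcal{L}_1$, and composing yields that $R_F^{\textrm{op}}{\circ}R_F$ is a homogeneous bisimulation on $\mathcal{L}_1$. Unwinding the definition, $p\,(R_F^{\textrm{op}}{\circ}R_F)\,p'$ iff $F(p){=}F(p')$, so this relation is exactly the equivalence used to define $\mathcal{L}_1/F$. By Prop.~\ref{prop:bisimulation-quotient-base}, the membership relation $E{=}\{(p,\dot{p}):p{\in}\dot{p}\}$ is then a bisimulation $\mathcal{L}_1{\leftrightarrow}\mathcal{L}_1/F$.

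\medskip

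\noindent\textbf{Step 2: define $\tilde{F}$ and show it is a bisimulation.} I define $\tilde{F}:\mathcal{L}_1/F{\to}\mathcal{L}_2$ by $\tilde{F}(\dot{p}){=}F(p)$ for any representative $p{\in}\dot{p}$; this is well-defined precisely because the equivalence quotiented out is the kernel of $F$. I claim that the graph $R_{\tilde{F}}$ equals $E^{\textrm{op}}{\circ}R_F$: indeed, a pair $(\dot{p},q)$ lies in the composition iff there exists $p'{\in}\dot{p}$ with $q{=}F(p')$, which is iff $q{=}\tilde{F}(\dot{p})$. By Prop.~\ref{prop:bisimulation-stability}, $E^{\textrm{op}}$ is a bisimulation $\mathcal{L}_1/F{\leftrightarrow}\mathcal{L}_1$, and its composition with $R_F$ is a bisimulation $\mathcal{L}_1/F{\leftrightarrow}\mathcal{L}_2$. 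Hence $\tilde{F}$ is a bisimulation.

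\medskip

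\noindent\textbf{Step 3: bijectivity.} Surjectivity of $\tilde{F}$ is immediate from surjectivity of $F$: any $q{\in}S_2$ is $F(p)$ for some $p$, hence $q{=}\tilde{F}(\dot{p})$. Injectivity is a direct consequence of the very definition of the equivalence: $\tilde{F}(\dot{p}){=}\tilde{F}(\dot{p}')$ means $F(p){=}F(p')$ for representatives $p,p'$, which puts them in the same class, so $\dot{p}{=}\dot{p}'$.

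\medskip

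\noindent\textbf{Expected difficulty.} There is no real mathematical obstacle; the argument is essentially a diagram chase. The only point demanding care is identifying the quotienting equivalence $F^{\textrm{op}}{\circ}F$ with the kernel of $F$ viewed as a function, and then realising $\tilde{F}$'s graph as a composition of previously established bisimulations so that Prop.~\ref{prop:bisimulation-stability} and Prop.~\ref{prop:bisimulation-quotient-base} can be plugged in without rechecking the label-wise commutation diagram from scratch.
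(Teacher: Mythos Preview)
Your proof is correct and follows essentially the same route as the paper: you obtain the kernel $F^{\textrm{op}}{\circ}F$ as a homogeneous bisimulation via Prop.~\ref{prop:bisimulation-stability}, invoke Prop.~\ref{prop:bisimulation-quotient-base} to make $\in$ a bisimulation $\mathcal{L}_1{\leftrightarrow}\mathcal{L}_1/F$, and then realise $\tilde{F}$ as the composite of ${\in}^{\textrm{op}}$ with $F$, exactly as the paper does with $\tilde{F}{=}F{\circ}{\in}^{\textrm{op}}$. The only blemish is notational: under the composition convention you yourself use for $R_F^{\textrm{op}}{\circ}R_F$, the composite you want in Step~2 should be written $R_F{\circ}E^{\textrm{op}}$ rather than $E^{\textrm{op}}{\circ}R_F$, but your verbal description and the subsequent appeal to Prop.~\ref{prop:bisimulation-stability} make clear you intend the correctly typed composite.
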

\begin{proof}
$F$ is a bisimulation $\mathcal{L}_1{\leftrightarrow}\mathcal{L}_2$, hence, by Prop.~\ref{prop:bisimulation-stability}, $F^{\textrm{op}}$ is a bisimulation $\mathcal{L}_2{\leftrightarrow}\mathcal{L}_1$ and $\mathcal{R}{=}F^{\textrm{op}}{\circ}F$ is a bisimulation $\mathcal{L}_1{\leftrightarrow}\mathcal{L}_1$. Hence, by Prop.~\ref{prop:bisimulation-quotient-base}, $\in$ is a bisimulation $\mathcal{L}_1{\leftrightarrow}(\mathcal{L}_1/F)$ and, by Prop.~\ref{prop:bisimulation-stability} again, $\tilde{F}{=}F{\circ}{\in}^{\textrm{op}}$ is a bisimulation $(\mathcal{L}_1/F){\leftrightarrow}\mathcal{L}_2$. By construction, $\tilde{F}$ is an injective mapping between the state space of $\mathcal{L}_1/F$ and that of $\mathcal{L}_2$. Indeed

{\bf 1--} Consider a class $\dot{p}$ modulo $\mathcal{R}$ and let $p{\in}\dot{p}$. Hence, by definition, $\dot{p}\tilde{F}q$ where $q{=}F(p)$. Furthermore, suppose $\dot{p}\tilde{F}q_1$ and $\dot{p}\tilde{F}q_2$. Hence $q_1{=}F(p_1)$ and $q_2{=}F(p_2)$ for some $p_1,p_2{\in}\dot{p}$. Hence, by definition, $p_1\mathcal{R}p_2$ hence $F(p_1){=}F(p_2)$, i.e. $q_1{=}q_2$. Hence $\tilde{F}$ is a mapping.

{\bf 2--} Suppose $\dot{p}_1\tilde{F}q$ and $\dot{p}_2\tilde{F}q$. Hence $F(p_1){=}F(p_2){=}q$ for some $p_1,p_2{\in}\dot{p}_1{\times}\dot{p}_2$. Hence, by definition, $p_1\mathcal{R}p_2$, i.e. $\dot{p}_1{=}\dot{p}_2$. Hence $\tilde{F}$ is injective.

Now, ${\in}^{\textrm{op}}$ is obviously surjective between the state space of $\mathcal{L}_1/F$ and that of $\mathcal{L}_1$, hence when $F$ is surjective, so is $\tilde{F}$, which is then bijective.
\end{proof}
\begin{definition}
\label{def:lts-closure}
Given an LTS $\mathcal{L}$, its {\em transitive closure} is another LTS denoted $\mathcal{L}^*$ on the same state space, where the labels are the finite sequences of labels of $\mathcal{L}$ and the transitions are defined by
\[
p\xrightarrow[\;(\mathcal{L}^*)\;]{\ell_{1:n}}p'
\hspace{.5cm}\textrm{if}\hspace{.5cm}
\exists p_{0:n}\textrm{ such that }p=p_0\xrightarrow[\;(\mathcal{L})\;]{\;\ell_1\;}p_1\cdots \xrightarrow[\;(\mathcal{L})\;]{\;\ell_{n-1}\;}p_{n-1}\xrightarrow[\;(\mathcal{L})\;]{\;\ell_n\;}p_n=p'
\]
\end{definition}
\begin{proposition}
\label{prop:bisimulation-transitive}
If $\mathcal{R}$ is a bisimulation $\mathcal{L}_1{\leftrightarrow}\mathcal{L}_2$, then it is also a bisimulation $\mathcal{L}^*_1{\leftrightarrow}\mathcal{L}^*_2$.
\end{proposition}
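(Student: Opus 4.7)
The plan is to prove this by induction on the length $n$ of the label sequence, using the one-step bisimulation property of $\mathcal{R}$ as the inductive engine. Since a bisimulation is a pair of mutual simulations, it suffices to show that $\mathcal{R}$ is a simulation $\mathcal{L}_1^*\to\mathcal{L}_2^*$; the opposite direction $\mathcal{R}^{\textrm{op}}:\mathcal{L}_2^*\to\mathcal{L}_1^*$ follows by the same argument applied to the bisimulation $\mathcal{R}^{\textrm{op}}:\mathcal{L}_2\leftrightarrow\mathcal{L}_1$ (whose existence is given by Prop.~\ref{prop:bisimulation-stability}).

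For the base case $n=1$, the required property is exactly the fact that $\mathcal{R}$ is a simulation $\mathcal{L}_1\to\mathcal{L}_2$, since a length-1 transition in $\mathcal{L}_i^*$ is, by Def.~\ref{def:lts-closure}, a transition in $\mathcal{L}_i$. For the inductive step, assume the claim holds for sequences of length $n$, and suppose $p\mathcal{R}q$ together with $p\xrightarrow[(\mathcal{L}_1^*)]{\ell_{1:n+1}}p'$. By the definition of the transitive closure, this transition decomposes as $p=p_0\xrightarrow{\ell_1}p_1\xrightarrow{\ell_{2:n+1}}p_{n+1}=p'$, where the first arrow is a one-step transition in $\mathcal{L}_1$ and the second is a length-$n$ transition in $\mathcal{L}_1^*$. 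Applying the one-step simulation property to $p_0\mathcal{R}q$ and $p_0\xrightarrow{\ell_1}p_1$ yields a $q_1$ with $q\xrightarrow{\ell_1}q_1$ in $\mathcal{L}_2$ and $p_1\mathcal{R}q_1$. Then applying the induction hypothesis to $p_1\mathcal{R}q_1$ and the remaining length-$n$ transition produces $q'$ with $q_1\xrightarrow[(\mathcal{L}_2^*)]{\ell_{2:n+1}}q'$ and $p'\mathcal{R}q'$. Concatenating these two fragments in $\mathcal{L}_2^*$ completes the step.

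The only genuine care needed is the boundary in Def.~\ref{def:lts-closure}: one should decide whether $n=0$ is admitted. If it is, the empty-label transition $p\xrightarrow{}p$ must be matched, which is trivial by taking $q'=q$. Otherwise one starts the induction at $n=1$ as above. There is no real obstacle in the proof: the whole argument is a straightforward ``zip along the two paths step by step'' construction, which is why transitive closure interacts so cleanly with bisimulation. The main thing to be careful about is to phrase the decomposition of a length-$(n{+}1)$ path in a way that exposes both a one-step prefix (to invoke the hypothesis on $\mathcal{R}$) and a length-$n$ suffix (to invoke the induction hypothesis), and then to reassemble the two $\mathcal{L}_2^*$ fragments using associativity of label concatenation in the definition of $\mathcal{L}_2^*$.
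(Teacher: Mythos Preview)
Your proof is correct and follows essentially the same approach as the paper: the paper's argument is a diagram chase that ``successively applies the commutation property to each cell from left to right,'' which is precisely your induction on the label length $n$ written out formally. Your explicit treatment of the $n=0$ boundary and the reduction to one direction via $\mathcal{R}^{\textrm{op}}$ are minor additions that the paper's outline leaves implicit.
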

\begin{proof}
(outline) This is essentially shown by successively applying the commutation property to each cell of the following diagram (from left to right):
\begin{center}
\begin{tikzpicture}
\node (p0) at (0,0) {$p_0$};\node (q0) at (0,-1) {$q_0$};
\node (p1) at (2,0) {$p_1$};\node (q1) at (2,-1) {$q_1$};
\node (p2) at (4,0) {$p_{n-1}$};\node (q2) at (4,-1) {$q_{n-1}$};
\node (p3) at (6,0) {$p_n$};\node (q3) at (6,-1) {$q_n$};
\draw[->] (p0) -- (p1) node [midway,above] {$\ell_1$};
\draw[->,dashed] (p1) -- (p2);
\draw[->] (p2) -- (p3) node [midway,above] {$\ell_n$};
\draw[->] (q0) -- (q1) node [midway,above] {$\ell_1$};
\draw[->,dashed] (q1) -- (q2);
\draw[->] (q2) -- (q3) node [midway,above] {$\ell_n$};
\draw[->] (p0) -- (q0) node [midway,left] {$\mathcal{R}$};
\draw[->] (p1) -- (q1) node [midway,left] {$\mathcal{R}$};
\draw[->] (p2) -- (q2) node [midway,right] {$\mathcal{R}$};
\draw[->] (p3) -- (q3) node [midway,right] {$\mathcal{R}$};
\end{tikzpicture}
\end{center}
\end{proof}
\begin{proposition}
\label{prop:bisimulation-transitive-quotient}
Let $\mathcal{R}$ be an equivalence relation on the state space of $\mathcal{L}$. If $\mathcal{R}$ is a bisimulation on $\mathcal{L}$, then $(\mathcal{L}/\mathcal{R})^*{=}\mathcal{L}^*/\mathcal{R}$.
\end{proposition}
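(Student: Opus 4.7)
The plan is to show that the two LTSs have the same state space, the same label space, and the same transitions. The state spaces (equivalence classes of $\mathcal{R}$) and label spaces (finite sequences of labels of $\mathcal{L}$) agree by construction, so the whole content of the proposition is an equality of transition relations. I would prove this equality by two inclusions, handled separately.

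For the inclusion $(\mathcal{L}/\mathcal{R})^{*} \subseteq \mathcal{L}^{*}/\mathcal{R}$, suppose $\dot{p} \xrightarrow{\ell_{1:n}} \dot{p}'$ in $(\mathcal{L}/\mathcal{R})^{*}$, so there is a chain $\dot{p} = \dot{q}_0 \xrightarrow{\ell_1} \dot{q}_1 \xrightarrow{\ell_2} \cdots \xrightarrow{\ell_n} \dot{q}_n = \dot{p}'$ in $\mathcal{L}/\mathcal{R}$. Given an arbitrary $p \in \dot{p}$, I would build a concrete path in $\mathcal{L}$ by induction on the step index, using the ``$\forall p \in \dot{q}_i,\ \exists p' \in \dot{q}_{i+1}$'' defining property of each quotient transition to extend $p_0=p,\,p_1,\,\dots,\,p_i$ by a suitable $p_{i+1} \in \dot{q}_{i+1}$ with $p_i \xrightarrow{\ell_{i+1}} p_{i+1}$. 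The resulting $p_n$ lies in $\dot{p}'$ and witnesses $p \xrightarrow{\ell_{1:n}} p_n$ in $\mathcal{L}^{*}$; since this holds for every representative $p$ of $\dot{p}$, the quotient transition $\dot{p} \xrightarrow{\ell_{1:n}} \dot{p}'$ in $\mathcal{L}^{*}/\mathcal{R}$ follows. Notice that this direction only uses that $\mathcal{R}$ is an equivalence.

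For the converse inclusion $\mathcal{L}^{*}/\mathcal{R} \subseteq (\mathcal{L}/\mathcal{R})^{*}$, suppose $\dot{p} \xrightarrow{\ell_{1:n}} \dot{p}'$ in $\mathcal{L}^{*}/\mathcal{R}$. Pick any $p \in \dot{p}$; the quotient definition gives $p' \in \dot{p}'$ with $p \xrightarrow{\ell_{1:n}} p'$ in $\mathcal{L}^{*}$, hence a concrete path $p = p_0 \xrightarrow{\ell_1} p_1 \xrightarrow{\ell_2} \cdots \xrightarrow{\ell_n} p_n = p'$ in $\mathcal{L}$. I would then use Prop.~\ref{prop:bisimulation-quotient-base} (or equivalently, the single-step argument behind Prop.~\ref{prop:bisimulation-transitive}) to lift each individual step to a quotient transition $\dot{p}_i \xrightarrow{\ell_{i+1}} \dot{p}_{i+1}$ in $\mathcal{L}/\mathcal{R}$: for any $q \in \dot{p}_i$, since $q\mathcal{R}p_i$ and $\mathcal{R}$ is a bisimulation, there is $q'$ with $q \xrightarrow{\ell_{i+1}} q'$ and $q'\mathcal{R}p_{i+1}$, so $q' \in \dot{p}_{i+1}$. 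Concatenating these quotient steps yields the desired chain, hence $\dot{p} \xrightarrow{\ell_{1:n}} \dot{p}'$ in $(\mathcal{L}/\mathcal{R})^{*}$.

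The main obstacle, and the place where one must be careful, is the asymmetry in the quotient transition definition: the source class is quantified universally while the target class is quantified existentially. This is what forces the forward inclusion to use the bisimulation hypothesis (so that a path witnessed for one representative of the source class transfers to any other representative), and it is also what makes the backward inclusion work by an iterated ``pick a lift'' construction. Once this asymmetry is handled, both directions reduce to straightforward inductions on $n$.
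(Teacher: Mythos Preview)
Your argument is correct, but it takes a genuinely different route from the paper. The paper gives a short abstract proof that leans entirely on the earlier machinery: from $\mathcal{R}$ being a bisimulation on $\mathcal{L}$, Prop.~\ref{prop:bisimulation-quotient-base} makes $\in$ a bisimulation $\mathcal{L}{\leftrightarrow}\mathcal{L}/\mathcal{R}$, and Prop.~\ref{prop:bisimulation-transitive} lifts this to $\mathcal{L}^*{\leftrightarrow}(\mathcal{L}/\mathcal{R})^*$; symmetrically, $\mathcal{R}$ is a bisimulation on $\mathcal{L}^*$, so $\in$ is a bisimulation $\mathcal{L}^*{\leftrightarrow}\mathcal{L}^*/\mathcal{R}$; composing one with the inverse of the other (Prop.~\ref{prop:bisimulation-stability}) yields that ${\in}\circ{\in}^{\mathrm{op}}$, which is the identity on $\mathcal{R}$-classes, is a bisimulation $\mathcal{L}^*/\mathcal{R}{\leftrightarrow}(\mathcal{L}/\mathcal{R})^*$, and an identity bisimulation forces the transition relations to coincide. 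Your approach instead unpacks the definitions and proves the two inclusions by explicit element-chasing inductions on the length of the label sequence. This is more elementary and has the merit of pinpointing exactly where the bisimulation hypothesis is consumed (only in the inclusion $\mathcal{L}^*/\mathcal{R}\subseteq(\mathcal{L}/\mathcal{R})^*$, as you observe), whereas the paper's categorical route is shorter but obscures that asymmetry. One small expositional wrinkle: in your closing paragraph you call ``forward'' the inclusion that needs the bisimulation and ``backward'' the one handled by iterated lifting, which is the opposite of the order in which you presented them; the mathematics is unaffected, but the labels may confuse a reader.
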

\begin{proof}
$\mathcal{R}$ is a bisimulation on $\mathcal{L}$, hence $\in$ is a bisimulation $\mathcal{L}{\leftrightarrow}\mathcal{L}/\mathcal{R}$ (Prop.~\ref{prop:bisimulation-quotient}), hence also a bisimulation $\mathcal{L}^*{\leftrightarrow}(\mathcal{L}/\mathcal{R})^*$ (Prop.~\ref{prop:bisimulation-transitive}). Analogously, $\mathcal{R}$ is a bisimulation on $\mathcal{L}$, hence also a bisimulation on $\mathcal{L}^*$  (Prop.~\ref{prop:bisimulation-transitive}), hence $\in$ is a bisimulation $\mathcal{L}^*{\leftrightarrow}\mathcal{L}^*/\mathcal{R}$ (Prop.~\ref{prop:bisimulation-quotient-base}). By composition (Prop.~\ref{prop:bisimulation-stability}) we have ${\in}{\circ}{\in}^{\textrm{op}}$ is a bisimulation $\mathcal{L}^*/\mathcal{R}{\leftrightarrow}(\mathcal{L}/\mathcal{R})^*$. But ${\in}{\circ}{\in}^{\textrm{op}}$ is the identity. Hence $\mathcal{L}^*/\mathcal{R}{=}(\mathcal{L}/\mathcal{R})^*$.
\end{proof}

\clearpage
\section{Plots of some TSPLib and CVRPLib solutions}
\begin{center}
    \includegraphics[scale=0.4]{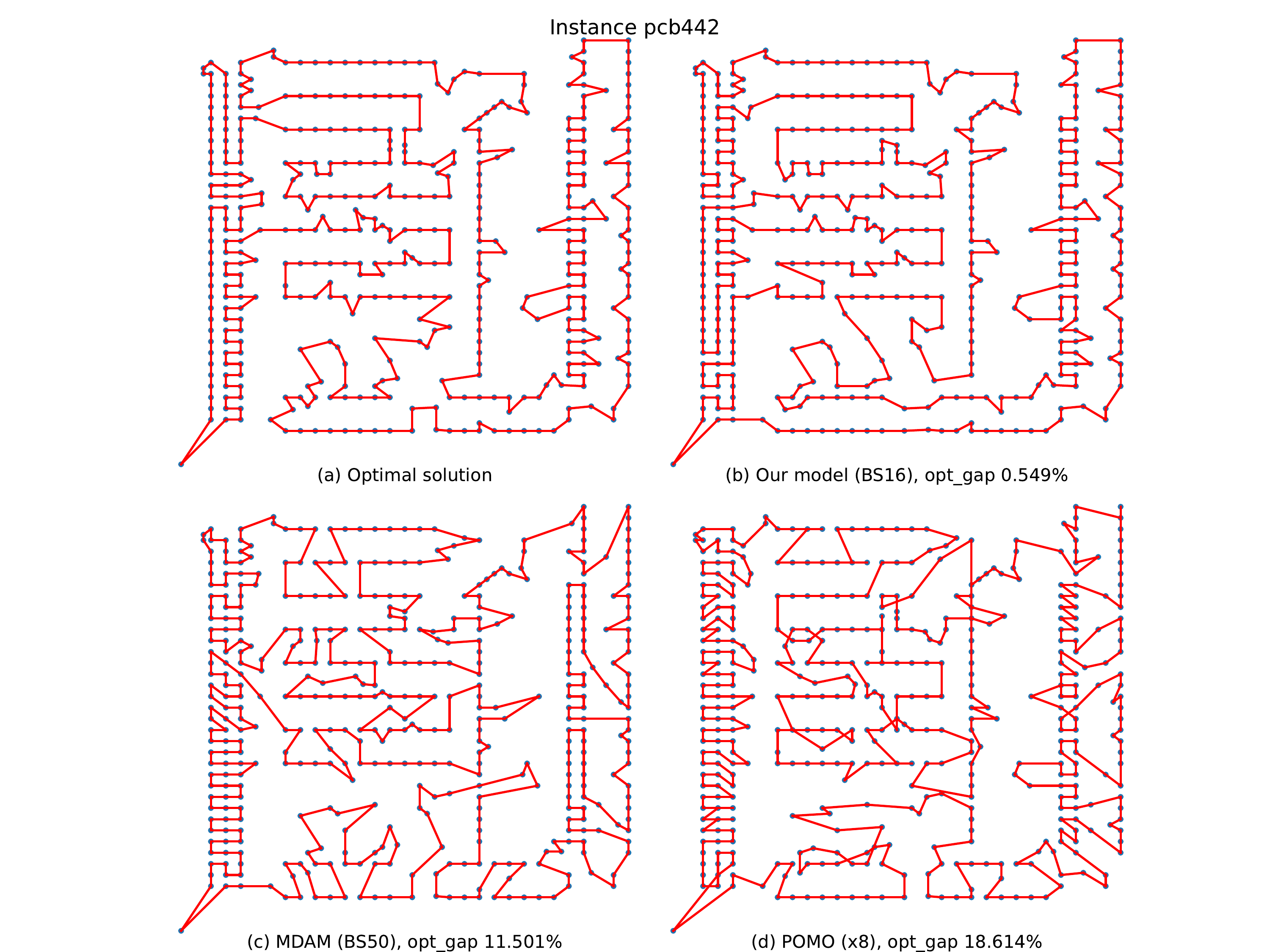}
\end{center}

\begin{center}
    \includegraphics[scale=0.4]{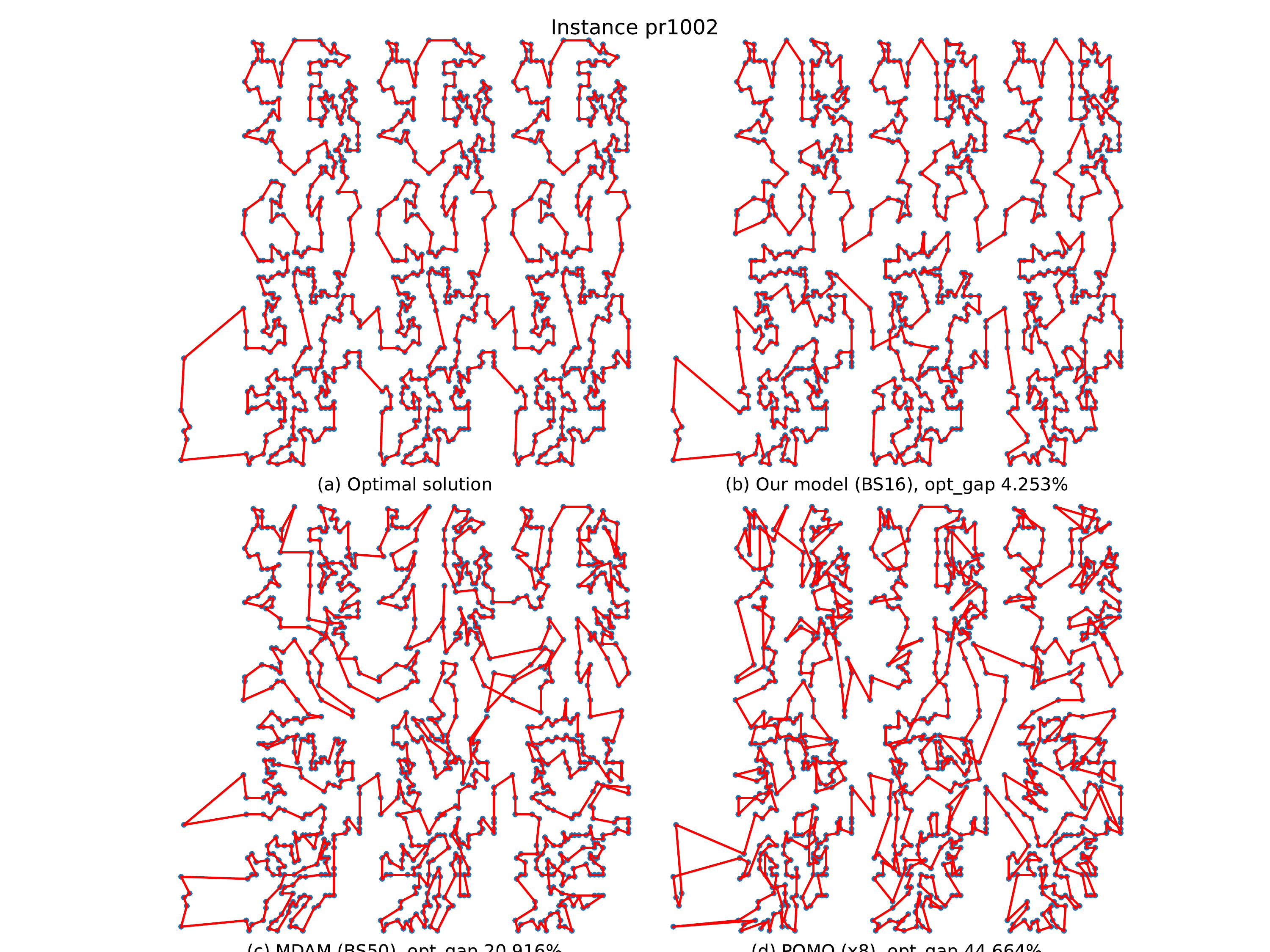}
\end{center}

\begin{center}
    \includegraphics[scale=0.4]{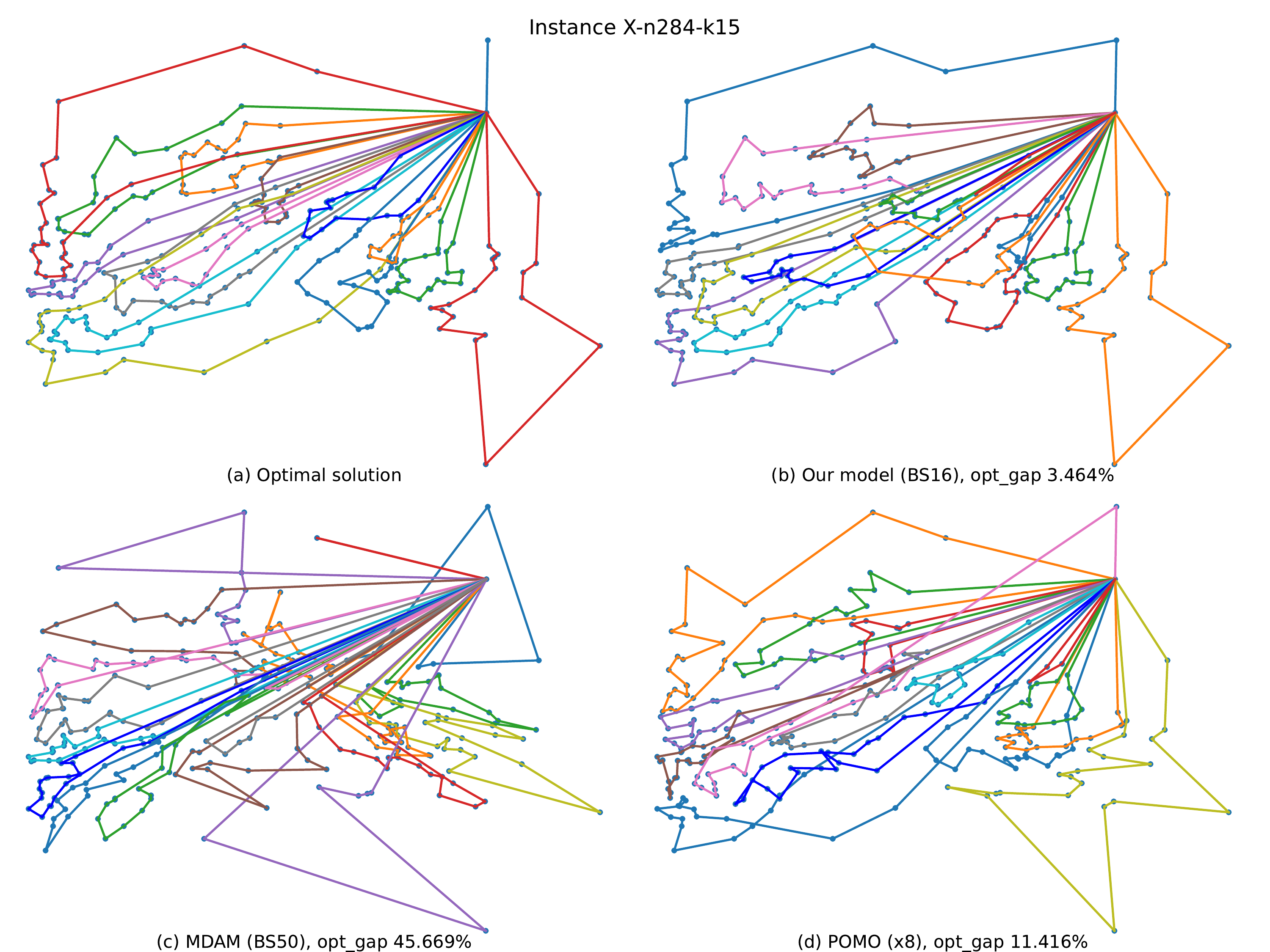}
\end{center}

\begin{center}
    \includegraphics[scale=0.4]{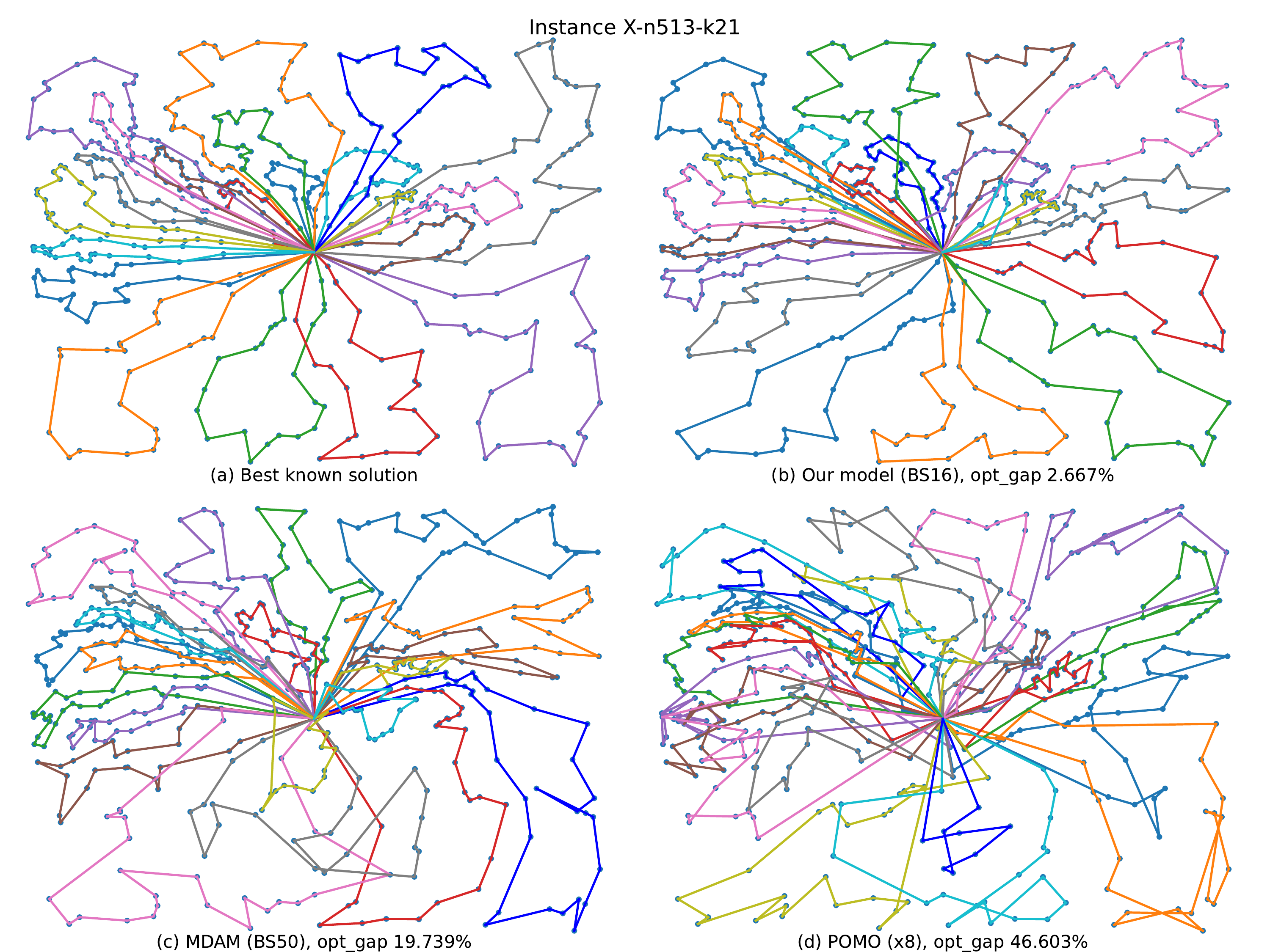}
\end{center}

\end{document}